\title{FaithLM: Towards Faithful Explanations for
Large Language Models}
\newtheorem{theorem}{Theorem}
\newtheorem{corollary}{Corollary}
\def\Algnameline{\textit{\textcolor{darkred}{Faith}ful L\textcolor{darkred}{LM} Explainers}}
\def\Algnameabbr{\texttt{FaithLM}}
\definecolor{darkred}{RGB}{255,0,0}
\definecolor{darkblue}{RGB}{0,0,180}
\author{%
Yu-Neng Chuang$^{1}$\thanks{Correspond to Yu-Neng Chuang  <ynchuang@rice.edu>, Vladimir Braverman <vova@cs.jhu.edu> and Xia Hu <xia.hu@rice.edu>} \quad Guanchu Wang$^{1}$ \quad \textbf{Chia-Yuan Chang}$^2$ \quad \textbf{Ruixiang Tang}$^1$ \\ \textbf{Shaochen Zhong}$^{1}$ \quad \textbf{Fan Yang}$^{3}$ \quad \textbf{Mengnan Du}$^{4}$ \quad \textbf{Xuanting Cai}$^{5}$ \\ \quad \textbf{Vladimir Braverman}$^{56}$ \quad \textbf{Xia Hu}$^1$\\
Rice University$^1$ \quad Texas A\&M University$^2$ \quad Wake Forest University$^3$ \\ New Jersey Institute of Technology$^4$ \quad John Hopkins University$^5$ \\ \quad Google Research$^6$ \quad Meta Platforms, Inc.$^7$\\
}
\begin{document}
\maketitle

\begin{abstract}

Large language models (LLMs) increasingly produce natural language explanations, yet these explanations often lack faithfulness, and they do not reliably reflect the evidence the model uses to decide. We introduce \Algnameabbr{}, a model-agnostic framework that evaluates and improves the faithfulness of LLM explanations without token masking or task-specific heuristics. \Algnameabbr{} formalizes explanation faithfulness as an intervention property: a faithful explanation should yield a prediction shift when its content is contradicted. Theoretical analysis shows that the resulting contrary-hint score is a sound and discriminative estimator of faithfulness. Building on this principle, \Algnameabbr{} iteratively refines both the elicitation prompt and the explanation to maximize the measured score. Experiments on three multi-domain datasets and multiple LLM backbones demonstrate that \Algnameabbr{} consistently increases faithfulness and produces explanations more aligned with human rationales than strong self-explanation baselines. These findings highlight that intervention-based evaluation, coupled with iterative optimization, provides a principled route toward faithful and reliable LLM explanations. 
Our source code is available at \url{https://github.com/ynchuang/xLLM}.
\end{abstract}


\section{Introduction}
\label{sec:intro}

Large language models (LLMs) exhibit remarkable performance in various natural language processing tasks, such as the GPT4~\cite{achiam2023gpt}, LLaMA~\cite{touvron2023llama}, and Claude~\cite{claude2023}.
However, these language models are commonly regarded as intricate black-box systems. 
The opacity of their internal mechanisms poses a significant challenge when trying to explain their decision-making process.
The lack of transparency in LLMs, especially in API-accessed LLM services, inferences contradict the practical requirements of stakeholders and are in opposition to regulatory standards in various domains, such as GDPR~\cite{goodman2017european, floridi2019establishing}. The imperative arises to develop explainability mechanisms for LLMs, particularly for their use in high-stakes applications such as healthcare. In this work, we focus on ``LLM explanation", rather than ``LLM reasoning" or ``LLM self-refinement", to interpret the model prediction behaviors after providing the final responses. (More illustrations of their discrepancy are in Section~\ref{sec:exp_vs_reason}).

Numerous studies have attempted to enhance the transparency of decision-making processes in LLMs by providing natural language (NL) explanations. However, this complexity poses challenges to faithfully explain the underlying explanation behind their decisions with natural language sentences. Recent advancements are struggling to generate reliable NL explanations for interpreting LLMs~\cite{ye2022unreliability}. Some work attempt to leverage powerful LLMs~\cite{majumder2021knowledge, chen2023lmexplainer, chen2023models} with auxiliary information to generate NL sentences or heatmap of input tokens as model explanations. Although existing work emerged that LLMs may possess the ability to self-explain~\cite{madsen2024can}, their explanation-generating process usually overlooks the fidelity, a fundamental metric for evaluating the quality of explanations~\cite{chuang2023efficient, wang2023leta}. The derived NL explanations may not faithfully reflect ``why model generate this answers."~\cite{zhao2023explainability, turpin2023language} Some work attempts to leverage chain-of-thought (CoT) reasoning steps as the post-hoc model explanation~\cite{lyu2023faithful, radhakrishnan2023question}. However, these reasoning steps are not considered model explanations in the context of post-hoc explanation~\cite{tanneru2024hardness}, where post-hoc ones particularly focus on providing faithful explanations for a given generated answer. These CoT steps are produced without a thorough fidelity check (i.e., one that involves masking out the key factors) to ensure they genuinely influence the final answer. These steps are only the intermediate results during the LLM prediction and their fidelity remains unknown. A proper fidelity measurement requires masking the critical features or key messages in the explanation and observing the model’s performance afterward~\cite{du2019techniques}, but neither of them are monitored or adopted before claiming CoT reasoning as model explanations.
Measuring the fidelity of NL explanations now become a important but challenging issue, as we can monitor and optimization the explanation generation process based on fidelity improvement. The ones may provide crucial information beyond the input context, but the faithful information may appear in semantic levels, making it hard to measure by manipulating the tokens for fidelity measurement.

\begin{table*}[t!]
\centering
\resizebox{\textwidth}{!}{
\begin{tabular}{p{4.cm} p{4.2cm} p{8.5cm}}
    \toprule
    {\footnotesize Question and LLM Answer} & {\footnotesize \text{Faithful NL Explanation}} & {\footnotesize Question conditioned with Contrary NL Explanation}\\
    \midrule
    \makecell[l]{\footnotesize \textbf{Question:} Can the positive \\ \footnotesize pole from two magnets pull \\ \footnotesize each other closer? \\ \footnotesize \textbf{Original Answer:} \textbf{\color{darkblue}{No}}} 
    & \makecell[l]{\footnotesize Each magnet has a positive \\ \footnotesize pole and a negative pole, \\ \footnotesize and \color{darkblue}{\textbf{similar}} \color{darkblue}{\textbf{poles push}} \\ \footnotesize \color{darkblue}{\textbf{each other away.}} } 
    & \makecell[l]{\footnotesize \textbf{Question:} Each magnet has a positive pole and a negative \\ \footnotesize pole, and {\color{red}{\textbf{similar poles}}} {\color{red}{\textbf{pull each other closer.}}} Can the \\ \footnotesize positive pole from two magnets pull each other closer? \\ \footnotesize\textbf{New Answer:} \textbf{\color{red}{Yes}} } \\
    \bottomrule
\end{tabular}
}
\caption{An example of measuring fidelity of NL explanations. The LLM first answers the question in \textbf{\textcolor{darkblue}{No}}. Given a faithful explanation \textit{``similar poles \textit{pull} each other away,"}  with its contrary NL explanation, the LLM changes the answer from \textbf{\textcolor{darkblue}{No}} to \textbf{\textcolor{darkred}{Yes}} when introduced contrary NL explanation as an extra condition to LLM. This example indicates that the contrary hint interrupts the LLM's original prediction process with faithful information, demonstrating that the information in an original explanation is faithful and aligns with LLM's initial answer.}
\vspace{-0.2cm}
\label{fig:prelim_example}
\end{table*}

To overcome this challenge, we assess the fidelity of natural language explanations by treating faithfulness as a causal property: if an explanation captures information or hints the model actually uses, then intervening on that information should predictably change the model’s output. Concretely, given an explanation, we construct a contrary hint that expresses the opposite semantics and append it to the input. We then measure fidelity as the resulting prediction shift relative to the original output. A large, directionally consistent change (for example, from \textbf{\textcolor{darkblue}{No}} to \textbf{\textcolor{darkred}{Yes}}) indicates that the explanation contained decision-relevant content that the contrary hint displaced. This evaluation mirrors recent work~\cite{chen2025reasoning} that tests whether models use and verbalize externally provided hints, and that reads faithfulness from sensitivity under controlled prompt interventions. Our procedure applies the same principle to explanation content, enabling a simple, model-agnostic fidelity measure for free-text rationales


Building upon this new fidelity measurement, we introduce \Algnameline{}~(\Algnameabbr{}) to generate faithful NL explanations for LLMs.
Specifically, \Algnameabbr{} adopts LLMs as explainer to generate the NL explanations and explanation trigger prompts, and iteratively optimizes the derived NL explanations and trigger prompts with the goal of fidelity enhancement.
During the iterative process, \Algnameabbr{} computes the fidelity of each derived explanation and optimized prompt based on our proposed fidelity measurement method, and progressively improves their fidelity through in-context learning. We conducted the experiments on four different LLMs under three datasets. \Algnameabbr{} achieves significantly higher fidelity in generating NL explanations and more closely matched the golden explanations compared with state-of-the-art baseline methods. Our contributions can be summarized as follows:

\begin{itemize}[leftmargin=*]
    \itemsep=-1.8pt
    \item \textbf{Intervention-based fidelity.} We define fidelity as prediction sensitivity to a \emph{contrary hint} that semantically opposes the explanation.
    \item \textbf{Faithful LLM Explainers.} Our method generates contrary hints, computes fidelity from outputs, and iteratively refines the prompt and explanation to increase faithfulness.
    \item \textbf{Empirical gains.} Experimental results show that \Algnameabbr{} raises measured fidelity and improves alignment with human explanation over baselines.

\end{itemize}

\section{Preliminaries}
\subsection{Notations and Objectives}
We aim to explain the decisions of arbitrary targeted LLMs $f(\cdot)$ with NL explanations in a post-hoc manner. Given an input $\boldsymbol{X}$, the targeted LLMs generate an output $\boldsymbol{Y} = f(\boldsymbol{X})$. Our objective is to produce an NL explanation $\mathcal{E}_{\text{NL}}$ that faithfully explains the reasons behind the prediction of $\boldsymbol{Y} = f(\boldsymbol{X})$.
In this work, we employ an LLM as the explainer $\textsl{g}(\cdot)$ to generate the NL explanation $\mathcal{E}_{\text{NL}} = \textsl{g}(\cdot ~|~ \boldsymbol{X}, \boldsymbol{Y})$.
However, the directly generated $\mathcal{E}_{\text{NL}}$ under single-forward passing may not be faithful and accurate, degrading the user's trust in the prediction made by the targeted LLM. The consistency between $f(\cdot)$ and $\textsl{g}(\cdot)$ is ensured through an iterative optimization process monitored by fidelity scores.
To this end, the explainer $\textsl{g}(\cdot)$ to generate \textbf{more faithful NL explanations regarding the decision of $f(\cdot)$ in a post-hoc manner}, where $f(\cdot)$ can be either closed-source or open-source LLMs. FaithLM targets realistic post-hoc interpretability for API-only or black-box LLMs, so we separate the \emph{target} model $f(\cdot)$ that answers from the \emph{explainer} $g(\cdot)$ that proposes $E_{NL}$ (and $\lnot E_{NL}$).

\subsection{Difference between LLM Explanation and Chain-of-thoughts}
\label{sec:exp_vs_reason}
Due to the limited accessibility of LLM APIs, recent research on LLM explanations has largely relied on post-hoc explanation approaches~\citep{chen2023lmexplainer}. However, some studies conflate 'LLM reasoning' and ``LLM self-refinement' with 'LLM explanations' when discussing these post hoc LLM explanations, even though these three terms are not identical and with different goals. We illustrate the difference as follows.

\paragraph{LLM reasoning and Chain-of-thoughts} refers to the internal process the model undergoes when it encounters a query or instruction, such as weighting probabilities and generating words step by step plus verification, with the goal of improving performance on reasoning tasks.
Some advantages rely on providing chain-of-thought (CoT) reasoning~\citep{lanham2023measuring, radhakrishnan2023question, chen2023models, wang2022self} to present the hidden inference steps that the model goes through. These studies show that CoT~\cite{manuvinakurike2025thoughts} can improve reasoning performance, but does not necessarily provide an explanation or even count as explanations of how or why an LLM arrives at its answers~\citep{tanneru2024hardness}, where “good fidelity” in the series of work is typically defined by the alignment between the content of CoT and the final answer~\citep{lyu2023faithful, radhakrishnan2023question}.
Another line of work leverages self-refinement techniques~\citep{lightman2023let, madaan2024self, tian2024toward}, which employ self-reasoning or knowledge supervision as feedback, to iteratively enhance reasoning performance. Although these advancements introduce robust self-feedback loops that effectively boost reasoning accuracy, the “feedback” during optimization is neither necessarily faithful nor equivalent to LLM explanations~\citep{tanneru2024hardness}. Notably, this feedback may be wrong yet still guide LLMs toward a correct reasoning direction. Due to its non-stationary nature, it yields non-faithful outputs when treated as an LLM explanation, which is also very distinct from the goal of ``LLM explanation" tasks.

\paragraph{LLM Explanations.} Unlike LLM reasoning and self-refinement, \textit{LLM explanation} focuses on clarifying why the model provides a particular answer after generating its final decision~\citep{siegel2024probabilities}. The concept of fidelity in an LLM explanation~\citep{du2019techniques, zhao2023explainability}, which differs from LLM reasoning and self-refinement, refers to whether the model’s prediction would change if the key knowledge provided explanation were removed. If removing the knowledge causes a drastic change in the model’s prediction, we can conclude that the derived explanation is faithful to the LLMs prediction (i.e., the actual reasons that results the predictions). In this work, we focus on LLM explanation, rather than LLM reasoning or self-refinement.


\subsection{Limitations of Traditional Fidelity Measurement on NL Explanations}
\label{sec:prelim_fidelity}
The fidelity metric measures the fidelity of the given explanation, which is broadly applicable when ground-truth explanations are unavailable.
In the NLP scenario, fidelity has been used to evaluate the heatmap-formatted explanations~\cite{lopardo2023faithful, huang2023can}, where the heatmap one highlights the important tokens of the input. Specifically, fidelity evaluates the explanation by removing the important tokens from the input $\boldsymbol{X}$ and checking the prediction difference of the targeted LLM. Following the definition of fidelity~\cite{miro2024comprehensive}. Given a sequence of tokens $\boldsymbol{I} = \{t_1, \cdots, t_M \} \subseteq \mathcal{E}_{\text{NL}}$, which is identified as an important component of explanation to the prediction of a targeted LLM $\boldsymbol{Y} = f(\boldsymbol{X})$. Following the traditional fidelity definition, the fidelity can be estimated as:
\begin{equation}
    \notag \mathrm{Fidelity} = f(\boldsymbol{X}) - f(\boldsymbol{X} \setminus \boldsymbol{I}),
\end{equation}
where "$\boldsymbol{X} \setminus \boldsymbol{I}$" denotes token removal from $\boldsymbol{X}$ in $\boldsymbol{I}$. 

If important component $\boldsymbol{T}$ achieves higher fidelity, this demonstrates that $\mathcal{E}_{\text{NL}}$ comprises the crucial tokens that significantly influence the predictions of the targeted LLMs. 
However, it is challenging to evaluate the fidelity of NL explanations throughout the fidelity defined above, as the critical components in NL explanations may not contain in the input context $\boldsymbol{X}$.Some work~\cite{lanham2023measuring} attempts to measure fidelity by modifying the output chain-of-thought (CoT) reasoning to overcome this challenge. However, altering only the output does not guarantee changes in the model’s pre-filling probability and may therefore meet self-consistency, but rather than the definition of fidelity. Thus, we cannot simply remove or modify critical components from the question following the traditional definition. Unlike the previous approaches, we propose a solution to systematically address this obstacle by removing the critical components from the semantic level instead of the token level.

\section{\Algnameabbr{}: The Explainer LLM Framework}
In this section, we systematically introduce the generative explanation framework, \Algnameabbr{}, which derives faithful explanations in natural language format. The derived explanations are expected to accurately reflect the predictive decision-making process of targeted LLMs with high fidelity after optimizing under \Algnameabbr{}. 




\begin{figure}
  \begin{center}
    \includegraphics[width=0.48\textwidth]{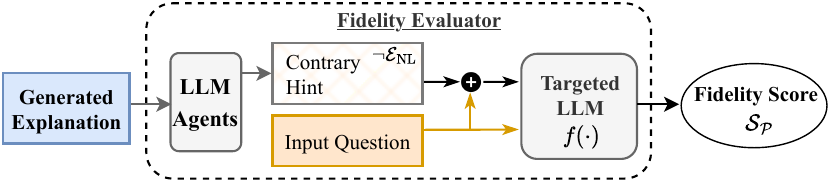}
  \end{center}
  \vspace{-2pt}
  \caption{The fidelity evaluation with hint. The evaluator calculates the fidelity scores of the derived explanations based on its contrary hints.}
  \vspace{-3pt}
  \label{fig:fed}
\end{figure}

\begin{figure*}[t!]
\centering
\subfigure[Fidelity-enhanced Explanation]{
\centering
	\begin{minipage}[t]{0.48\linewidth}
		\includegraphics[width=0.99\linewidth]{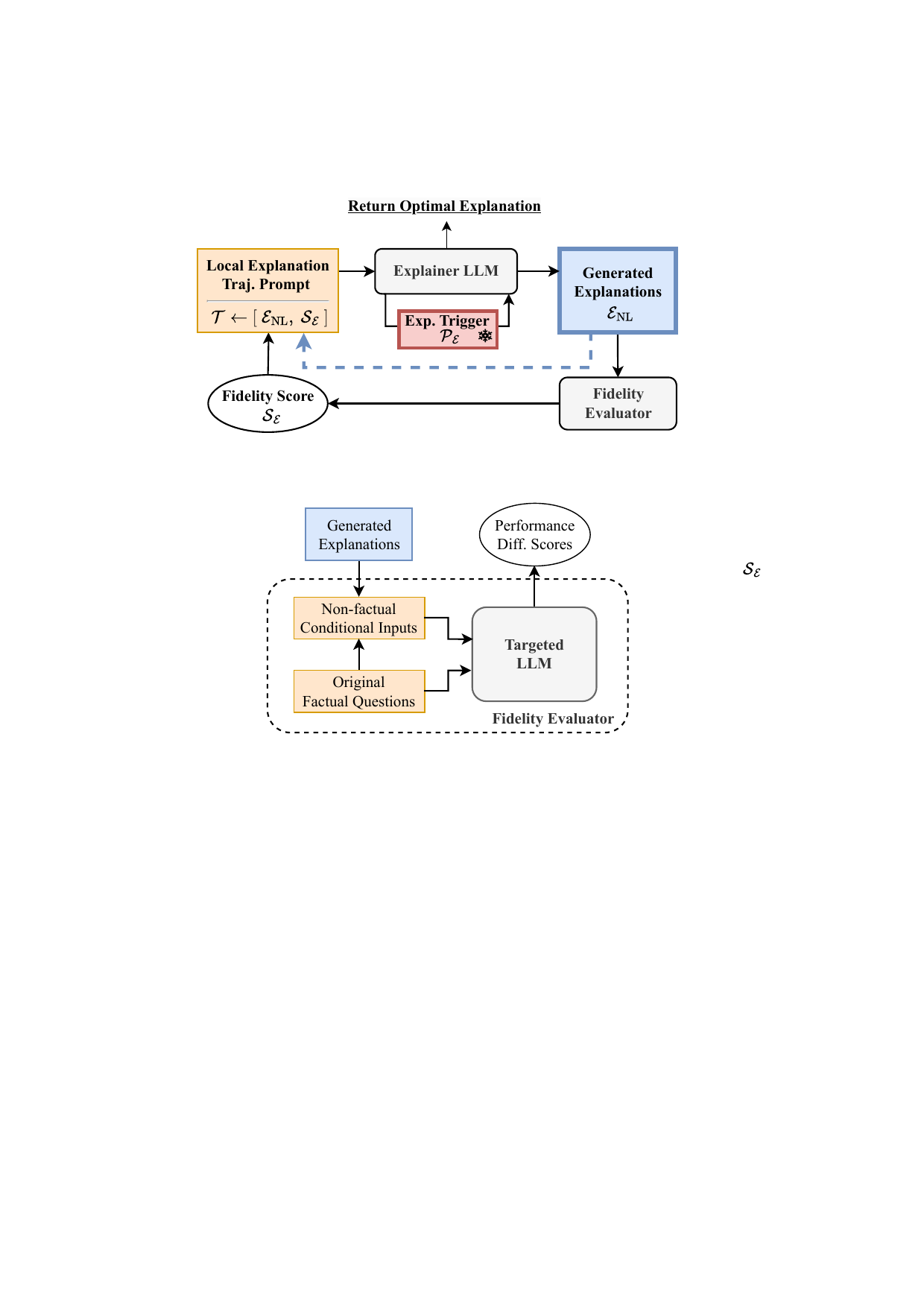}
	\end{minipage}%
        \label{fig:over-a}
}
\!\!\!\!
\subfigure[Trigger Prompt Optimization]{
	\begin{minipage}[t]{0.48\linewidth}
		\includegraphics[width=0.99\linewidth]{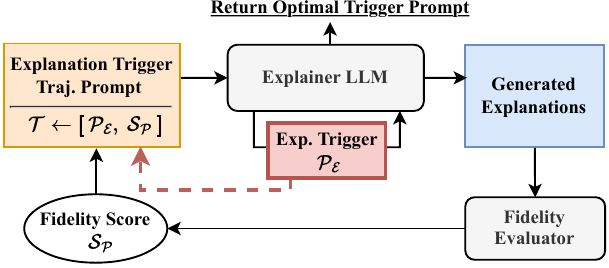}
	\end{minipage}%
        \label{fig:over-b}
}
\caption{An overview of \Algnameabbr{} framework for two differenent optimization objectives.
The {\color{darkblue}{blue dotted line}} reveals the trajectory to optimize the NL explanation (Section~\ref{sec:exp_opt}), and the {\color{darkred}{red dotted line}} indicates the trajectory of the explanation trigger prompt optimization  (Section~\ref{sec:tri_opt}). ``Traj. Prompt" denotes the trajectory system prompt shown in Appendix~\ref{appendix:prompt_xllm}.}
\label{fig:over}
\end{figure*}

\subsection{Fidelity via Contrary Hint Interventions}

To reduce penalization from irrelevant tokens, FaithLM computes the probability gap $\Delta p(Y|X)$ only over tokens with non-zero attribution in $f(\cdot)$’s saliency map, thereby isolating fidelity shifts caused by semantically relevant regions.

\paragraph{Contrary Hint Interventions} We evaluate the fidelity of a natural language explanation \(\mathcal{E}_{\text{NL}}\) by treating faithfulness as an intervention property on model inference, in line with recent work that tests faithfulness via controlled changes to the information a model conditions on \citep{chen2025reasoning, lanham2023measuring}. Let \(f(\mathbf{X})\) denote the model prediction on input \(\mathbf{X}\). We construct a \emph{contrary hint} \(\neg \mathcal{E}_{\text{NL}}\), defined as a statement whose semantics are opposed to \(\mathcal{E}_{\text{NL}}\) (for example, “similar poles pull each other closer’’ as the contrary of “similar poles push each other away’’). We then condition the model on this contrary hint and read out a fidelity score from the intervention-induced prediction shift:
\[
\mathcal{S}_{\mathcal{E}} := f(\mathbf{X}) - f(\mathbf{X}\mid \neg \mathcal{E}_{\text{NL}}).
\]
If \(\mathcal{E}_{\text{NL}}\) captures information the model actually uses, inserting \(\neg \mathcal{E}_{\text{NL}}\) produces a measurable and directionally consistent change in the output distribution, often sufficient to alter the discrete decision. This operationalization supports free-text explanations without token masking or task-specific heuristics. In practice, our \emph{Fidelity Evaluator} (i) prompts a strong LLM to produce \(\neg \mathcal{E}_{\text{NL}}\) from \(\mathcal{E}_{\text{NL}}\) under a fixed instruction template and (ii) estimates \(\mathcal{S}_{\mathcal{E}}\) from output logits or calibrated probabilities, using the sensitivity to the contrary-hint intervention as the fidelity signal.

\vspace{0.1cm}
\noindent\textbf{Relation to prior faithfulness evaluations.}
Our evaluation protocol follows the intervention-based paradigm introduced by prior work~\citep{chen2025reasoning} to test whether verbalized rationales genuinely reflect a model’s internal decision process~\citep{chen2025reasoning}. That work manipulates the presence and semantics of hints in prompts and measures whether models both \emph{use} and \emph{verbalize} those hints, revealing that stated chains of thought are often unfaithful to underlying reasoning. We adopt the same causal-testing philosophy but apply it to explanation content rather than reasoning text. Specifically, the contrary hint serves as a controlled semantic intervention on the information encoded in the explanation, and the resulting change in model prediction quantifies its \emph{fidelity}. In this formulation, faithfulness is defined as prediction sensitivity: an explanation is faithful if altering its truth value changes the model’s decision. Consequently, our contrary-hint score provides a direct, model-agnostic instantiation of the evaluation approach advocated by~\citet{chen2025reasoning}, complementing prior intervention tests on CoT text~\citep{lanham2023measuring} by operating at the level of meaning rather than surface form.

\vspace{0.1cm}
\begin{theorem}[\textbf{Latent-Context Intervention Validity for Faithfulness}]
\label{thm:intervention}
Let $f:\mathcal{X}\!\times\!\mathcal{C}\!\to\!\Delta(\mathcal{Y})$ be a language model mapping an input $X$ and latent context $C$ to a predictive distribution over an output space $\mathcal{Y}$. 
Let $E_{NL}$ denote a natural-language explanation of $f(X;C)$, and let $\lnot E_{NL}$ denote its contrary hint. 
Assume that $E_{NL}$ asserts a proposition about a semantic factor $S_E=s(X,C)$, where $s(\cdot)$ extracts the decision-relevant concept, which is latent or retrieved, that the explanation verbalizes. 
Conditioning on $\lnot E_{NL}$ is equivalent to intervening on this factor while holding $(X,C)$ fixed, i.e.,
$f(X;C\mid\lnot E_{NL}) = f(X;C\mid do(S_E\!\leftarrow\!\bar s))$ 
for some contradictory value $\bar s$, and predictions are invariant to any irrelevant text $R$, so $f(X;C)=f(X\cup R;C)$. 
Defining $S_E(X;C)=D(f(X;C),f(X;C\mid\lnot E_{NL}))$, where $D$ is any strictly proper divergence, we have
\vspace{0.1cm}
\begin{align}
    \notag S_E(X;C)=0 &\iff E_{NL}\text{ is non-faithful for }f(X;C),\\
    \notag S_E(X;C)>0 &\iff E_{NL}\text{ is faithful for }f(X;C).
\end{align}
\vspace{0.1cm}
Hence, the contrary-hint score $S_E$ constitutes a valid empirical estimator of faithfulness when the decision-relevant content is not contained in the observed input $X$ but arises from latent or retrieved context.
\end{theorem}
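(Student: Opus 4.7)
My plan is to derive the biconditional by composing three ingredients already supplied by the hypotheses: (a) the intervention equivalence $f(X;C\mid\lnot E_{NL})=f(X;C\mid do(S_E\!\leftarrow\!\bar s))$, (b) the invariance $f(X;C)=f(X\cup R;C)$ to semantically irrelevant text, and (c) the defining property of a strictly proper divergence, namely $D(p,q)=0$ iff $p=q$ and $D(p,q)>0$ otherwise. I will first fix a precise notion of faithfulness that matches the wording of the theorem: $E_{NL}$ is \emph{faithful} for $f(X;C)$ exactly when the predictive distribution is sensitive to the latent factor it verbalizes, i.e., $f(X;C\mid do(S_E\!\leftarrow\!\bar s))\neq f(X;C)$; otherwise it is \emph{non-faithful}. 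With this definition in place, the theorem reduces to showing that the contrary-hint score vanishes precisely when the underlying semantic intervention is null.

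The non-faithful direction will be obtained by chaining (a) and (c). If $E_{NL}$ is non-faithful, then $f(X;C\mid do(S_E\!\leftarrow\!\bar s))=f(X;C)$ by definition; substituting via (a) yields $f(X;C\mid\lnot E_{NL})=f(X;C)$, and strict propriety gives $D(f(X;C),f(X;C\mid\lnot E_{NL}))=0$, i.e., $S_E(X;C)=0$. Assumption (b) is invoked here to rule out that this numerical null is an artifact of appending surface text whose semantics are vacuous, ensuring that the score is attributable to the semantic intervention alone. The converse direction is symmetric: faithfulness yields $f(X;C\mid do(S_E\!\leftarrow\!\bar s))\neq f(X;C)$, so (a) forces $f(X;C\mid\lnot E_{NL})\neq f(X;C)$, and strict propriety then gives $S_E(X;C)>0$. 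Taking contrapositives of each implication closes both equivalences.

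The main obstacle, and the step I expect to warrant the most care in the write-up, is justifying that the purely textual act of appending $\lnot E_{NL}$ to the prompt genuinely instantiates a $do$-intervention on the latent factor $S_E$ rather than an observational conditioning that could also perturb unrelated variables. The theorem asserts this equivalence as hypothesis (a), but a rigorous presentation should emphasize that (b) is precisely what closes the gap: irrelevant-text invariance guarantees that any measured change in $f$ arises solely from the semantic contradiction carried by $\lnot E_{NL}$ and not from prompt-length, token-count, or lexical-style confounds. Once this modeling point is made explicit, the remainder of the argument is an immediate application of strict propriety to the two cases, and the conclusion that $S_E(X;C)$ is a sound and discriminative estimator of faithfulness follows directly.
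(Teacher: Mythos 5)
Your proposal is correct and follows essentially the same route as the paper's proof: both define faithfulness as sensitivity of the predictive distribution to the intervention $do(S_E\!\leftarrow\!\bar s)$, invoke the assumed equivalence between conditioning on $\lnot E_{NL}$ and that intervention, and then apply strict propriety of $D$ to translate distributional equality or inequality into $S_E(X;C)=0$ or $S_E(X;C)>0$. Your added emphasis on why the irrelevant-text invariance is needed to rule out surface-level confounds is a slightly more explicit treatment of a point the paper only states in its final sentence, but it does not change the argument.
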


\vspace{0.1cm}
\noindent\textbf{Intuition of the theoretical foundation.}
Theorem~\ref{thm:intervention} formalizes this causal perspective: if the explanation encodes information that lies on a causal path to the model’s output, then intervening with a contradictory hint necessarily shifts the predictive distribution, yielding a positive contrary-hint score. Conversely, if the explanation is merely correlational or decorative, the intervention leaves the model unchanged and the score remains zero. Thus, the contrary-hint fidelity measure $S_E$ causal faithfulness by directly testing whether the model’s output is \emph{functionally dependent} on the semantics expressed in its own explanation. More discussions and proof are in Appendix~\ref{apdx:proof} and Corollary~\ref{cor:robustness}.

\subsection{\Algnameabbr{} on Fidelity-enhanced Explanation}
\label{sec:exp_opt}
In this section, we introduce an iterative framework designed to progressively enhance the fidelity of NL explanations. The primary goal of \Algnameabbr{} here is to generate faithful NL explanations with iterative fidelity-enhanced optimization.

\paragraph{Fidelity-enhanced Explanation.} 
The framework of fidelity-enhanced explanation is illustrated in Figure~\ref{fig:over-a}. Since the initial explanation may be unreliable and unfaithful, we propose a fidelity-enhanced optimization approach designed to progressively generate explanations with higher fidelity.
We aim to explain the response $\boldsymbol{Y}$ produced by the targeted LLM $f(\cdot)$ in response to the given input queries $\boldsymbol{X}$ with NL explanations $\mathcal{E}_{\text{NL}}$ following the goal of fidelity enhancement.
In the first round of enhancement, the LLM explainer generates NL explanations $\mathcal{E}_{\text{NL}}$ following a given human-crafted explanation trigger prompt $\mathcal{P}_\mathcal{E}$ provided in Appendix~\ref{appendix:prompt_meta}. The explanations are then generated by the explainer $\textsl{g}( \mathcal{P}_\mathcal{E} | ~\boldsymbol{X}, \boldsymbol{Y})$.
Starting from the second round till converge, \Algnameabbr{} collects a trajectory $\mathcal{T}$ with the NL explanations $\mathcal{E}_{\text{NL}}$ and their corresponding fidelity scores $\mathcal{S}_\mathcal{E}$ generated by Fidelity Evaluator. The collection process can be represented as $\mathcal{T} \leftarrow \{ \mathcal{T}, ~[ \mathcal{E}_{\text{NL}}, ~\mathcal{S}_\mathcal{E}] \}$, where $\mathcal{T}$ initially starts as an empty trajectory. Following this trajectory, the LLM explainer generates new explanations with the goal of achieving higher fidelity scores in subsequent iterations. This process is guided by the system prompts detailed in Figure~\ref{appendix:fig:exp}. 

The trajectory $\mathcal{T}$ is continuously updated by incorporating each newly derived explanation with its assessed fidelity score until the convergence.
Regardless of any given explanation trigger prompts $\mathcal{P}_\mathcal{E}$, \Algnameabbr{} can all systematically guide the generation of NL explanations, progressively improving fidelity scores by following the reference path established in the trajectory.

\vspace{0.2cm}
\noindent\textbf{Algorithm of Fidelity-enhanced Explanation.}
\label{sec:opt_algo}
The outline of \Algnameabbr{} for Fidelity-enhanced Explanation is detailed in Algorithm~\ref{alg:xllm-exp}.
Specifically, in the first iteration, \Algnameabbr{} generates the NL explanations using the human-craft prompts~(line 1). starting from the second iteration till the convergence or optimization ends, \Algnameabbr{} estimates the fidelity of the derived NL explanations~(line 4). Then, we incorporate the explanation and its corresponding fidelity score to the trajectory~(line 5), and update the explanations with the goal of achieving higher fidelity scores in subsequent iterations~(line 6). The iteration terminates at a predetermined step or ceases earlier as soon as \Algnameabbr{} observes a flipping performance from the targeted LLM $f(\cdot)$.

\begin{algorithm}[t]
\small
\caption{\footnotesize Fidelity-enhanced Explanation}
\label{alg:xllm-exp}
\textbf{Input:} Input $\boldsymbol{X}$, output $\boldsymbol{Y}$, targeted LLMs $f(\cdot)$, human-crafted prompt $\mathcal{P}_\mathcal{E}$, and LLM explainer $\textsl{g}(\cdot)$.\\
\textbf{Output:} NL explanation $\mathcal{E}_{\text{NL}}$.\\
\vspace{-4mm}
\begin{algorithmic}[1]
\STATE $\mathcal{E}_{\text{NL}} \sim\textsl{g}(\mathcal{P}_\mathcal{E} ~|~ \boldsymbol{X}, \boldsymbol{Y})$
\STATE $\mathcal{T} = \varnothing$
\WHILE{ \textit{steps not end} \textbf{and} \textit{decision not flips} }
    \STATE Estimate the fidelity score $\mathcal{S}_\mathcal{E}$ of $\mathcal{E}_{\text{NL}}$ 
    \STATE Append $\mathcal{T} \leftarrow ~\mathcal{T} \cap [\mathcal{E}_{\text{NL}},~ \mathcal{S}_\mathcal{E} ]$
    \STATE Update $\mathcal{E}_{\text{NL}} \sim \textsl{g}( \mathcal{T} ~|~ \boldsymbol{X}, \boldsymbol{Y})$
\ENDWHILE
\end{algorithmic}
\end{algorithm}
\raggedbottom

\subsection{\Algnameabbr{} on Trigger Prompt Optimization}
\label{sec:tri_opt}

Despite the success of enhancing fidelity in Section~\ref{sec:exp_opt}, the low quality of the explanation trigger prompts $\mathcal{P}_\mathcal{E}$ may still hinder the optimization process of receiving a high-fidelity explanation.
Given that the unknown preference for prompts from LLMs, human-crafted trigger prompts used in Fidelity-enhanced Explanation Optimization might lead to sub-optimal fidelity enhancement in the derived explanations. In this section, we hereby propose a new optimization pipeline under \Algnameabbr{}, aiming to optimize the trigger prompt $\mathcal{P}_\mathcal{E}$ for generating NL explanations with higher fidelity scores as the LLM explanations of input each input query.

\paragraph{Trigger Prompt Optimization.}
The framework of Trigger Prompt Optimization is shown in Figure~\ref{fig:over-b}. The framework aims to optimize the trigger prompt to generate NL explanations with higher fidelity. 
Different from the optimization goal in Section~\ref{sec:exp_opt}, the trajectory in this task collects the trigger prompts $\mathcal{P}_\mathcal{E}$ and their fidelity scores $\mathcal{S}_\mathcal{P}$. The trajectory is constructed by the system optimization prompts detailed in Figure~\ref{appendix:fig:trg}. 

To estimate the fidelity score for a trigger prompt, \Algnameabbr{} first adopts the randomly human-crafted trigger prompt to guide the LLM explainers to generate NL explanations, and then utilize the Fidelity Evaluator to assess the fidelity of the derived explanation. The final estimated score is averaged by the fidelity score $\mathcal{S}_{\mathcal{E}_{i}}$ of the hold-out dataset $(\boldsymbol{X}_i, \boldsymbol{Y}_i) \in \mathcal{D}$.
Formally, the fidelity score for a trigger prompt $\mathcal{P}_\mathcal{E}$ is as follows:
\begin{equation}
    \mathcal{S}_{\mathcal{P}} = \mathbb{E}_{\mathcal{E}_{i} \sim \textsl{g}(\mathcal{P}_{\mathcal{E}} | \boldsymbol{X}_i, \boldsymbol{Y}_i)} \big[\mathcal{S}_{\mathcal{E}_{i}}\big],
\end{equation}
where $\mathcal{S}_{\mathcal{E}{i}}$ represents the fidelity score of the explanation $\mathcal{E}_{i}$, which is generated by $\textsl{g}(\mathcal{P}_{\mathcal{E}} | \boldsymbol{X}_i, \boldsymbol{Y}_i)$, as assessed by the Fidelity Evaluator. 

\begin{algorithm}[t]
\small
\caption{{\footnotesize Trigger Prompt Optimization.}}
\label{alg:xllm-trg}
\textbf{Input:} Hold-out dataset $\mathcal{D}$, Targeted LLMs $f(\cdot)$, and LLM explainers $\textsl{g}(\cdot)$.\\
\textbf{Output:} Optimal explanation trigger prompt $\mathcal{P}_\mathcal{E}$.\\
\vspace{-4mm}
\begin{algorithmic}[1]
\STATE Initialize human-crafted $\mathcal{P}_\mathcal{E}$
\STATE Initialize $\mathcal{T} = \{\varnothing$\}
\WHILE{ (Steps Not End) }
    \FOR{$(\boldsymbol{X}_i, \boldsymbol{Y}_i) \sim \mathcal{D}$}
        \STATE $\mathcal{E}_{i} \leftarrow \textsl{g}(\mathcal{P}_\mathcal{E} ~|~ \boldsymbol{X}_i, \boldsymbol{Y}_i)$ 
        \STATE Estimate the fidelity score $\mathcal{S}_i$ of $\mathcal{E}_{i}$
    \ENDFOR
    \STATE $\mathcal{S}_{\mathcal{P}} = \mathbb{E}_{\mathcal{E}_{i} \sim \textsl{g}(\mathcal{P}_{\mathcal{E}} | \boldsymbol{X}_i, \boldsymbol{Y}_i)} \big[\mathcal{S}_{\mathcal{E}_{i}}\big]$
    \STATE Append $\mathcal{T} \leftarrow ~\mathcal{T} \cap (\mathcal{P}_\mathcal{E}, \mathcal{S}_\mathcal{P})$
    \STATE Update $\mathcal{P}_\mathcal{E} \leftarrow \textsl{g}( \mathcal{P}_\mathcal{E} ~|~ \mathcal{D})$
\ENDWHILE
\end{algorithmic}
\end{algorithm}

During the optimization, the trajectory begins from an empty set and starts to incorporate newly derived trigger prompts with the fidelity scores in each optimization iteration. Following this trajectory, the LLM explainer generates a new trigger prompt with the goal of achieving higher fidelity scores of explanations in subsequent iterations. After several rounds of iterations, \Algnameabbr{} ultimately yields an optimal explanation trigger prompt with the highest fidelity score for the LLM explainer to generate a more faithful NL explanation.

\paragraph{Algorithm of Trigger Prompt Optimization.}
The outline of \Algnameabbr{} for Trigger Prompt Optimization is detailed in Algorithm~\ref{alg:xllm-trg},
which focuses on optimizing the trigger prompt for generating NL explanations. Specifically, in each iteration, LLM explainer $\textsl{g}(\cdot)$ leverages the trigger prompt to generate the NL explanations and estimates its fidelity~(lines 4-7). The fidelity scores of the trigger prompts average the fidelity scores of the entire hold-out dataset~(lines 8). Afterward, the trajectory appends the trigger prompt with its corresponding fidelity score~(line 9), and updates the trigger prompt as a new sequence of words to achieve higher fidelity scores (line 10). Through multiple iterations, \Algnameabbr{} progressively guides the trigger prompt to generate explanations with higher fidelity scores, following the reference path established in the trajectory. The iteration process terminates at a predetermined 20 step.

\begin{figure*}[t]
\centering
\subfigure{
\includegraphics[width=0.3\linewidth]{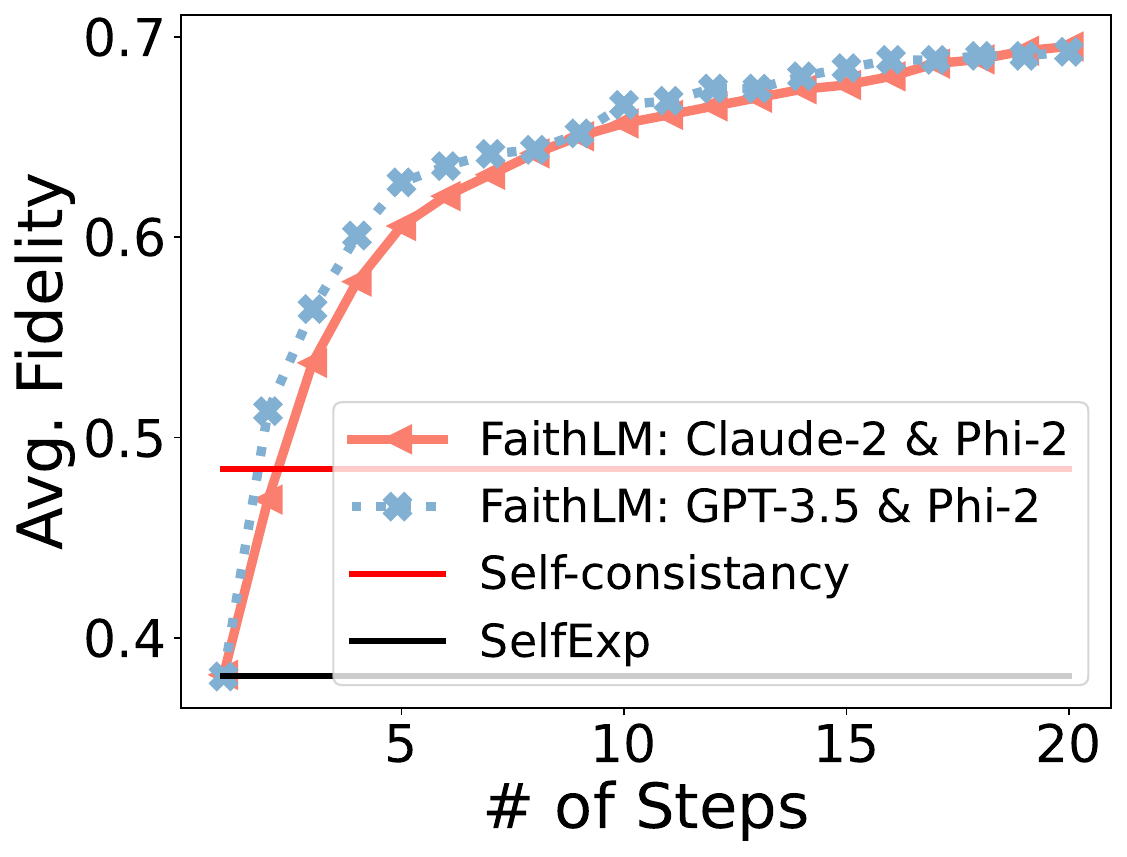}
}
\subfigure{
\includegraphics[width=0.3\linewidth]{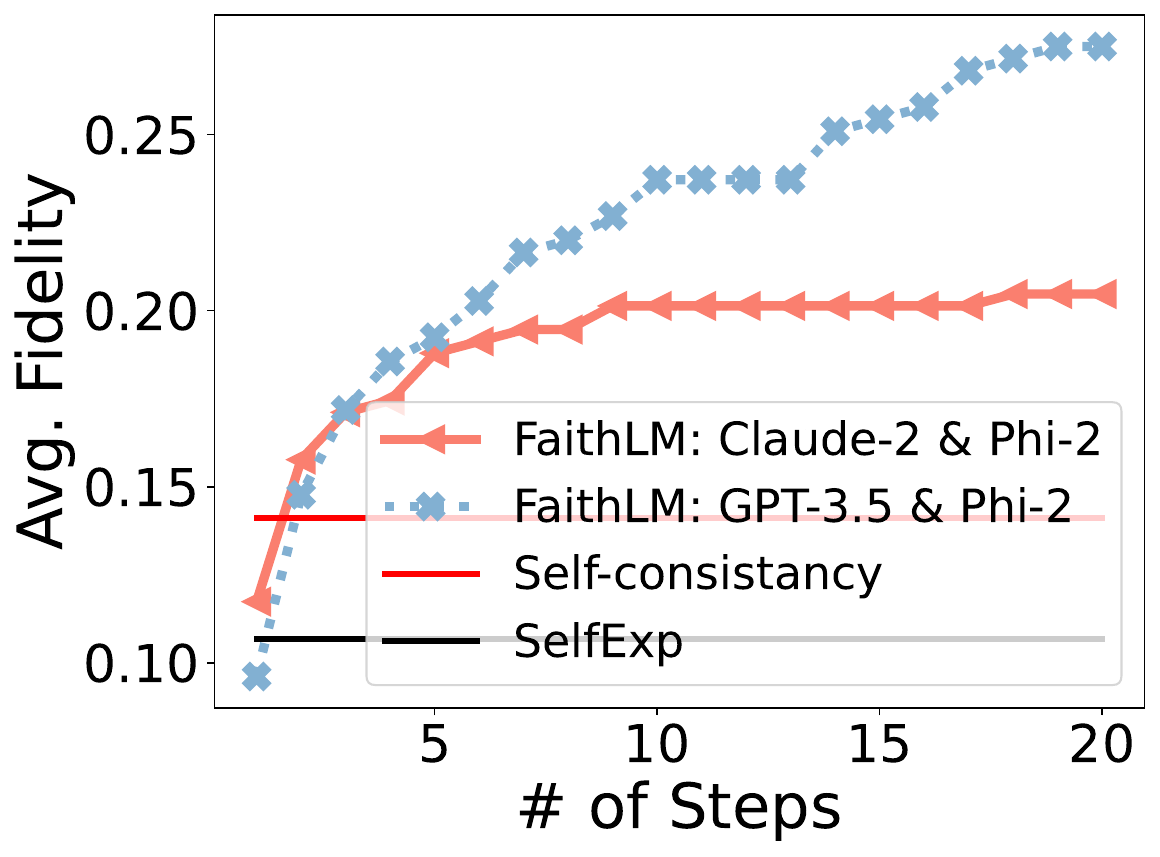}
}
\subfigure{
\includegraphics[width=0.3\linewidth]{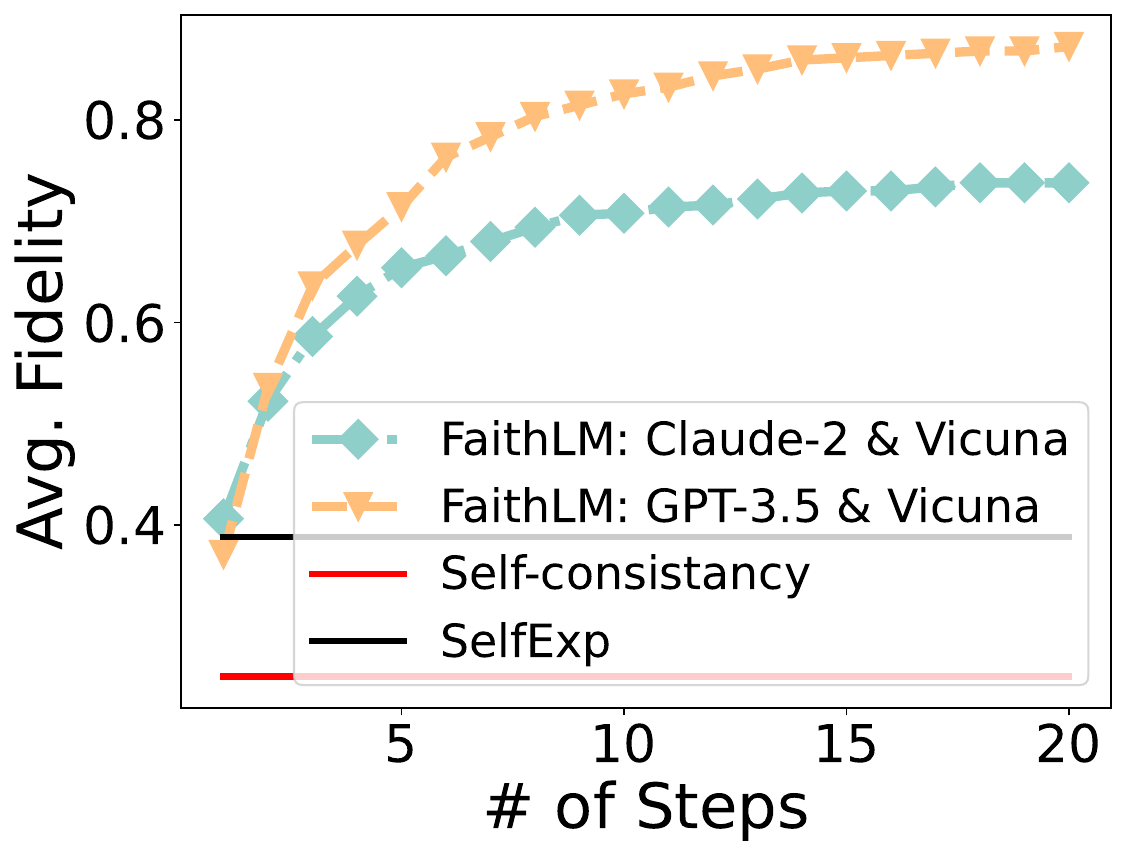}
}
\vspace{-0.4cm}
\caption{The fidelity evaluation of explanations on ECQA (left), TriviaQA-Long (middle), and COPA dataset (right). The reported scores are the average fidelity on testing instances in each step of fidelity-enhanced optimization.}
\label{fig:local_exp_accum}
\end{figure*}

\section{Experiment}
In this section, we conduct experiments to evaluate the performance of \Algnameabbr{}, aiming to answer the following three research questions: \textbf{RQ1:} How does \Algnameabbr{} perform in generating explanations in terms of efficacy? \textbf{RQ2:} Can optimized explanation trigger prompts transfer between different datasets? \textbf{RQ3:} Does the configurations of LLMs affect the explanation performance of \Algnameabbr{}?

\subsection{Dataset and Baseline}
\textbf{Datasets.} We evaluate \Algnameabbr{} on three datasets with multiple tasks: ECQA~\cite{aggarwaletal2021ecqa} dataset on commonsense question-answer task, TrivaQA-Long~\cite{bai2023longbench, 2017arXivtriviaqa} dataset on reading comprehension task, and COPA~\cite{kavumba-etal-2019-choosing, roemmele2011choice} dataset on commonsense causal reasoning task. More details of datasets are provided in Appendix~\ref{appendix:baseline_detail}. \textbf{Baseline Methods.} We compare \Algnameabbr{} with two state-of-the-art baseline methods: \texttt{SelfExp}~\cite{madsen2024can} and \texttt{Self-consistency}~\cite{wang2022self}. The former ones instruct LLMs to generate explanations using prompt engineering under single-forward inference, and the later ones leverage the outputs from the chain-of-thought prompting process as the model explanations.

\begin{figure*}[t]
    \small
    \centering
    \includegraphics[width=0.34\linewidth]{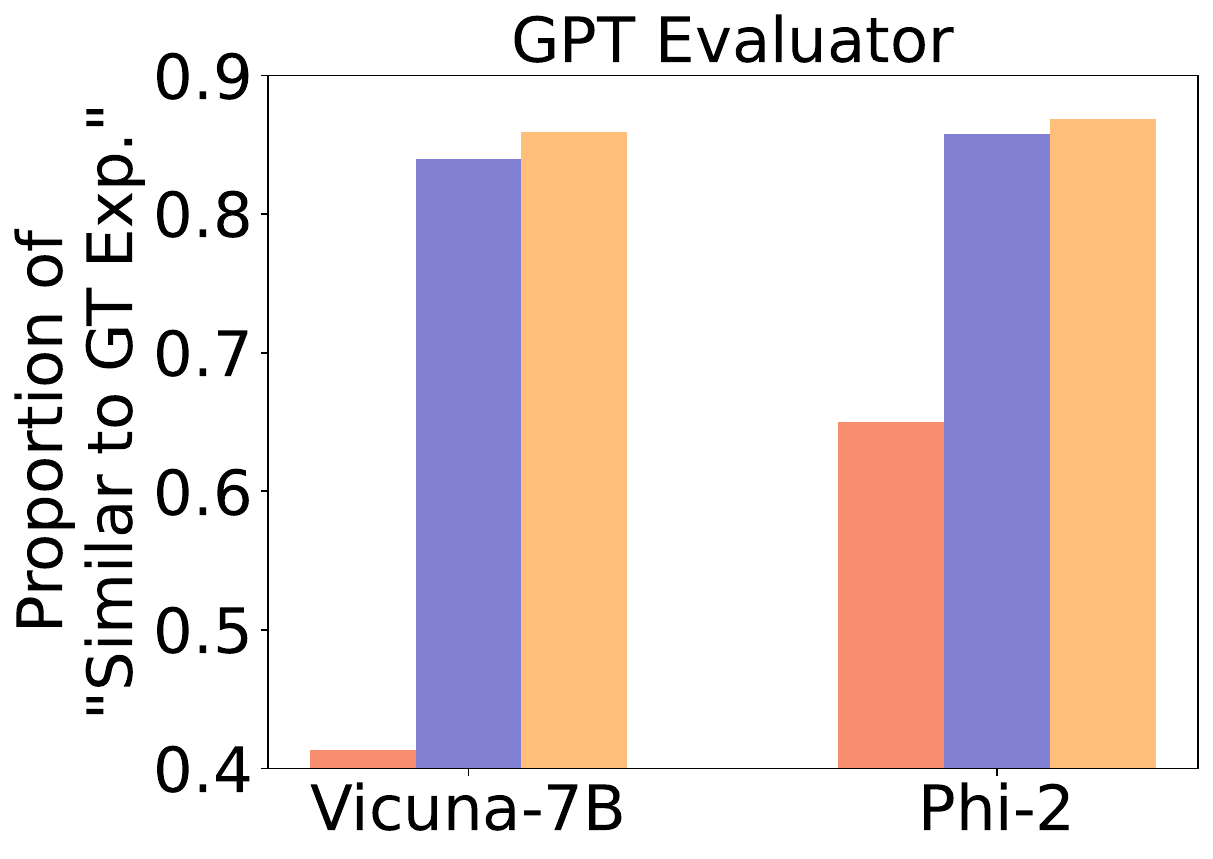}~~
    \includegraphics[width=0.3\linewidth]{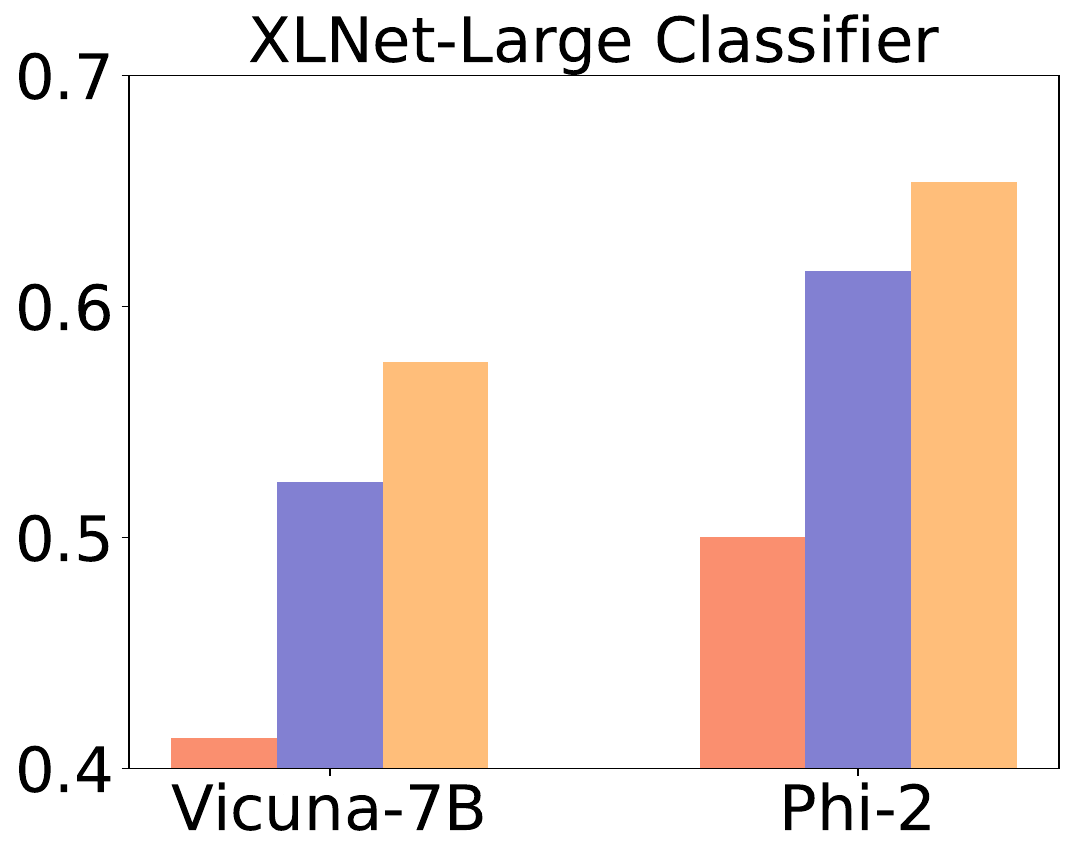}~~
    \includegraphics[width=0.3\linewidth]{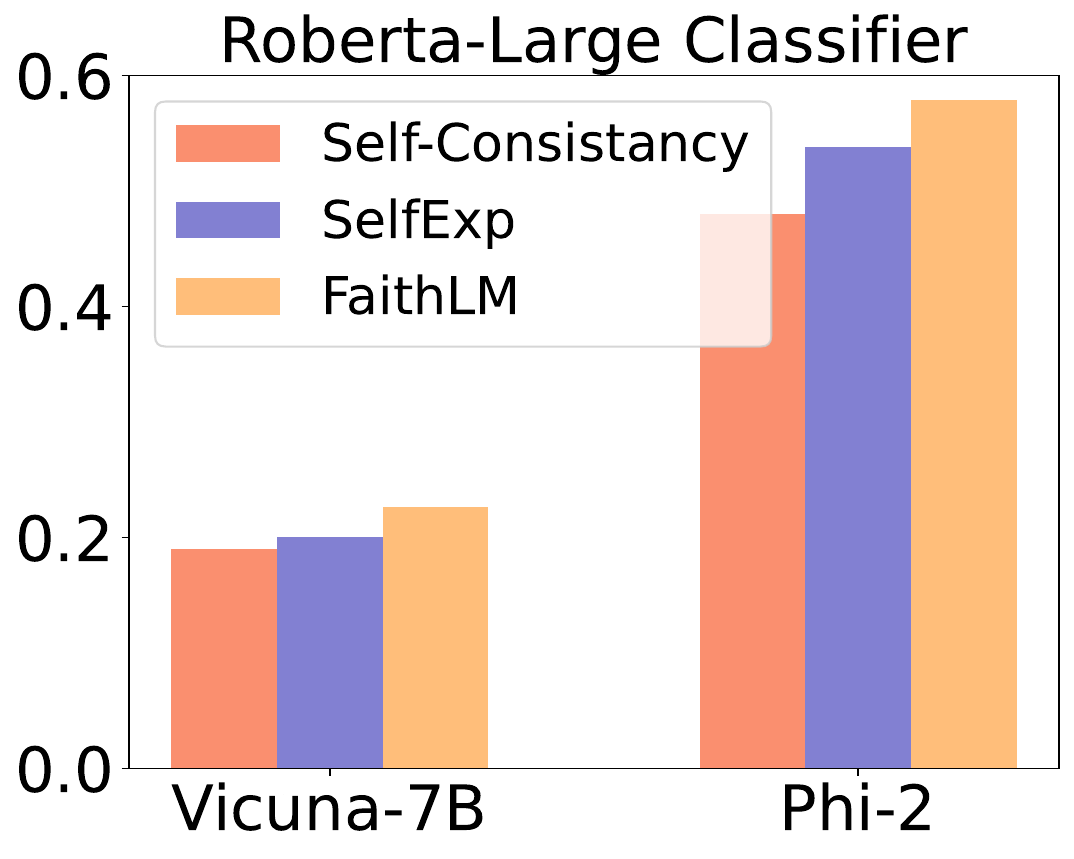}~~
\caption{Trustfulness evaluation of the NL explanations. Higher the proportion of ``similar to ground-truth explanation," the more consistent the derived explanations are with the ground-truth NL explanations.}
\label{fig:local_exp_gt}
\end{figure*}

\begin{figure*}[t]
\centering
\subfigure{
\includegraphics[width=0.3\linewidth]{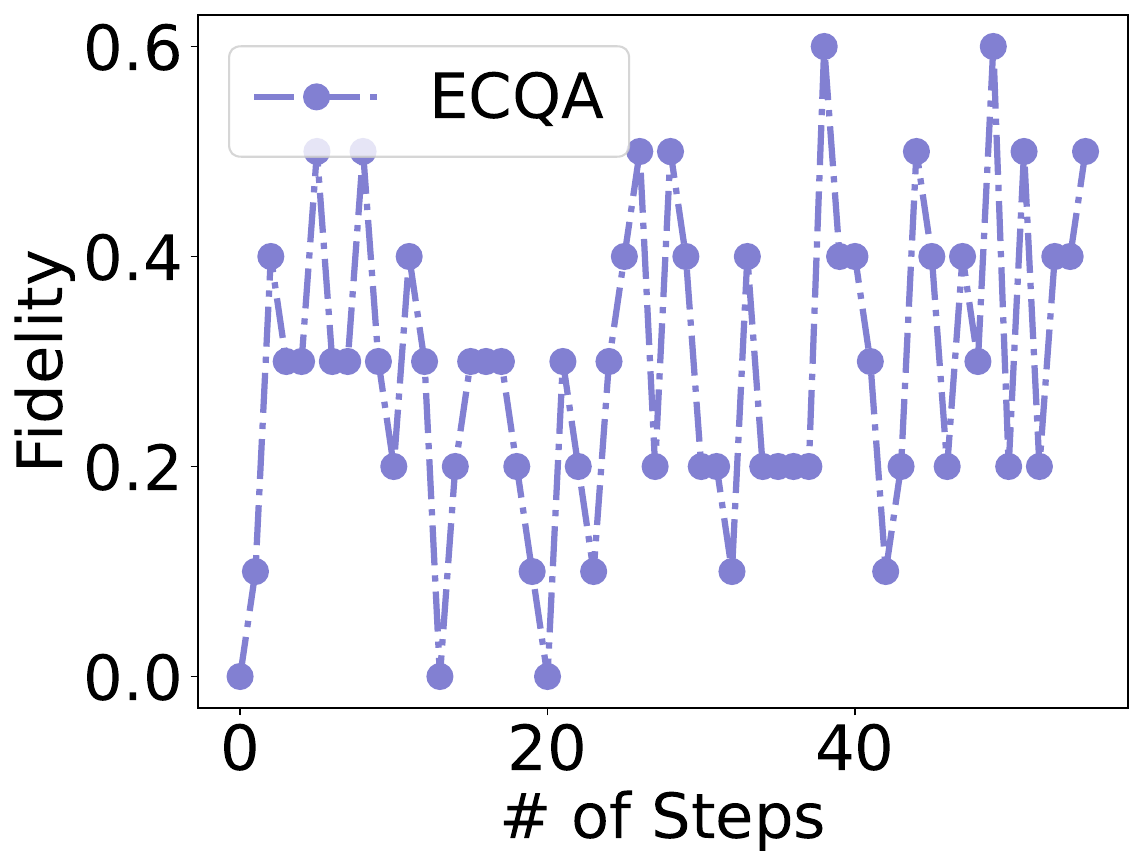}
}~
\subfigure{
\includegraphics[width=0.3\linewidth]{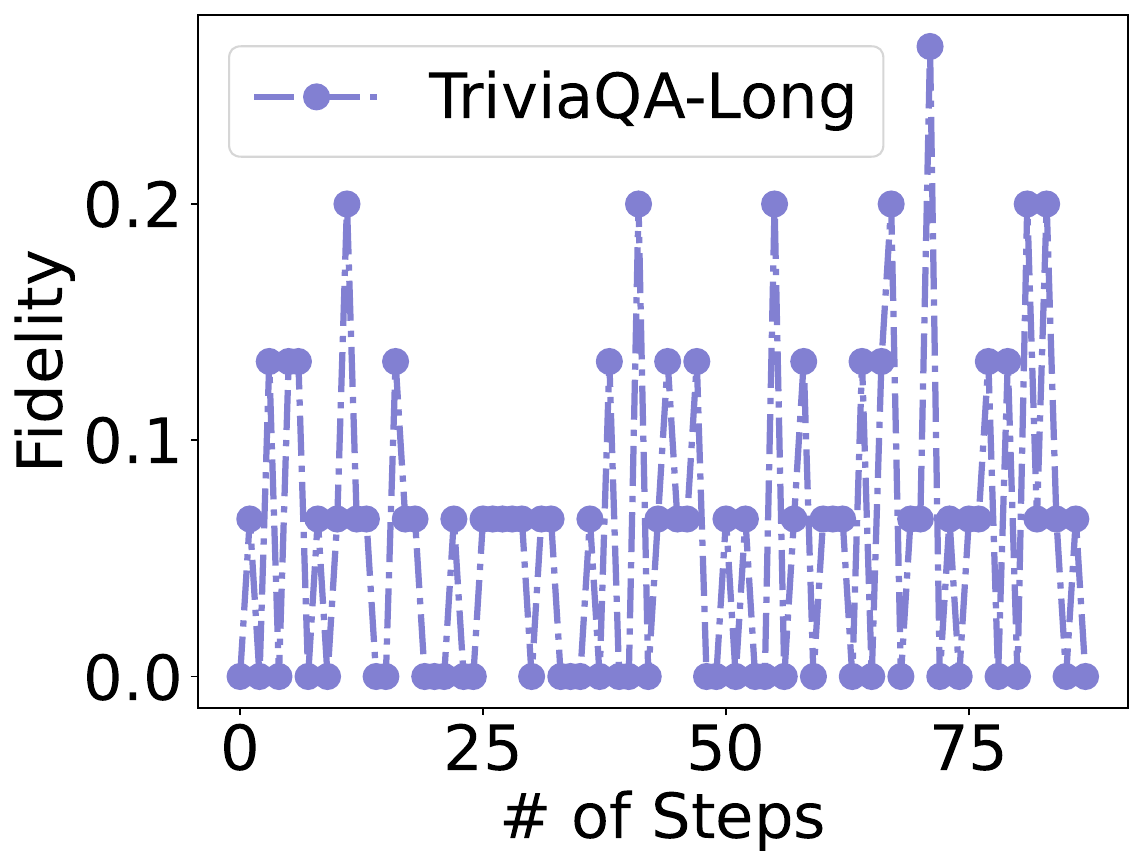}
}~
\subfigure{
\includegraphics[width=0.3\linewidth]{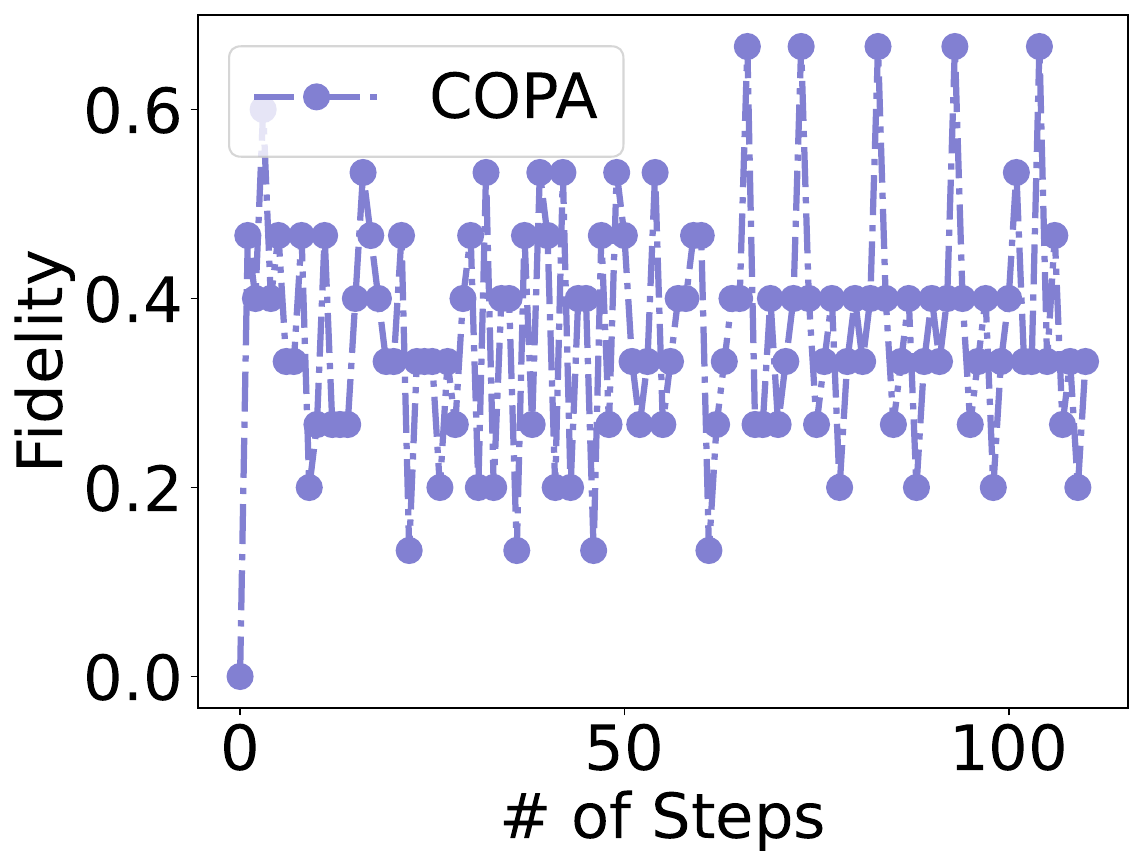}
}
\vspace{-0.4cm}
\caption{The fidelity in different optimization steps of the trigger prompts (Algorithm~\ref{alg:xllm-trg}) on the ECQA, TrivaQA, and COPA datasets. The fidelity grows higher as the number of steps increases.}
\label{fig:opt_curve}
\end{figure*}

\paragraph{Comparison with Concurrent Faithfulness Notions.}
Simulatability-based evaluations (e.g., ~\cite{madsen2024can, chen2023algorithms}) assess whether a model can reproduce predictions from an explanation, i.e., behavioral imitation. In contrast, FaithLM measures \emph{causal sensitivity} via a controlled intervention on the explanation’s semantics: we compute $S_E = f(X) - f(X\,|\,\lnot E_{NL})$ and read off prediction shifts induced by a contrary hint, without token masking or task-specific heuristics. This aligns with intervention-driven faithfulness testing advocated by recent work and keeps the target model $f$ unchanged.\footnote{See our definition and evaluator description in \S3.1.}
Empirically, we additionally report a small-scale comparison where we compute (i) simulatability scores and (ii) $S_E$ on the same instances; we analyze where the two agree and where $S_E$ detects causal use beyond imitability.

\subsection{Experimental Settings}
\label{sec:exp}
We evaluate \Algnameabbr{} through two complementary tasks: (i) fidelity-enhanced explanation generation and (ii) explanation-trigger prompt optimization. Details are presented in Appendix~\ref{apdx:exp_set}.

\paragraph{Fidelity-Enhanced Explanation.}
We measure the alignment between generated explanations and predictions~\cite{chen2025reasoning}. For each example, \Algnameabbr{} produces one NL explanation, and the average fidelity score across instances is reported as the final metric.

\paragraph{Explanation-Trigger Prompt Optimization.}
This task optimizes the trigger prompt that guides \Algnameabbr{} to generate more faithful explanations. At each iteration, 30 samples are drawn from the training data as a validation set, and the mean fidelity score on these samples is used as the optimization objective.

\vspace{0.05cm}
\noindent\textbf{Evaluation Metrics.}
We evaluate explanation quality using two metrics: \emph{fidelity} and \emph{truthfulness}.  
Fidelity follows the intervention-based protocol of~\citet{chen2025reasoning}, where contrary hints act as controlled semantic interventions and fidelity is the resulting prediction sensitivity.  
Truthfulness measures semantic consistency between generated and ground-truth explanations using GPT-4o, RoBERTa-Large, and XLNet-Large~\citep{nie-etal-2020-adversarial}, following~\citet{liu2023gpteval}. Evaluators classify each pair as \textit{similar}, \textit{dissimilar}, or \textit{non-relevant}, and the proportion of “similar’’ outputs forms the truthfulness score. Full prompts appear in Appendix~\ref{appendix:prompt_eval}.

\vspace{0.05cm}
\noindent\textbf{Implementation Details.}
We use Vicuna-7B~\citep{vicuna2023} and Phi-2~\citep{phi2} as target models $f(\cdot)$, and GPT-3.5-Turbo and Claude-2~\citep{claude} as explainers $g(\cdot)$. The same models generate contrary hints. Results are averaged over three runs with grid search on hyperparameters. Both Vicuna-7B and Phi-2 share identical decoding settings. Full configurations and infrastructure details are given in Appendices~\ref{appendix:hyp} and \ref{apx:infra}.

\subsection{Explanation Efficacy of \Algnameabbr{} (RQ1)}
\label{sec:local_eff}


\paragraph{Efficacy of Derived Explanations.}
We assess the efficacy of derived explanations under the fidelity metric. \Algnameabbr{} adopts the trajectory system prompts in Figure~\ref{appendix:fig:exp} of Appendix~\ref{appendix:prompt_xllm}. 
The generation of contrary hints is guided by the prompt in Table~\ref{tab:meta}.
\vspace{0.1cm}

\begin{figure*}[t]
\centering
\subfigure[Robustness analytics of trigger prompts]
{
    \includegraphics[width=0.235\linewidth]{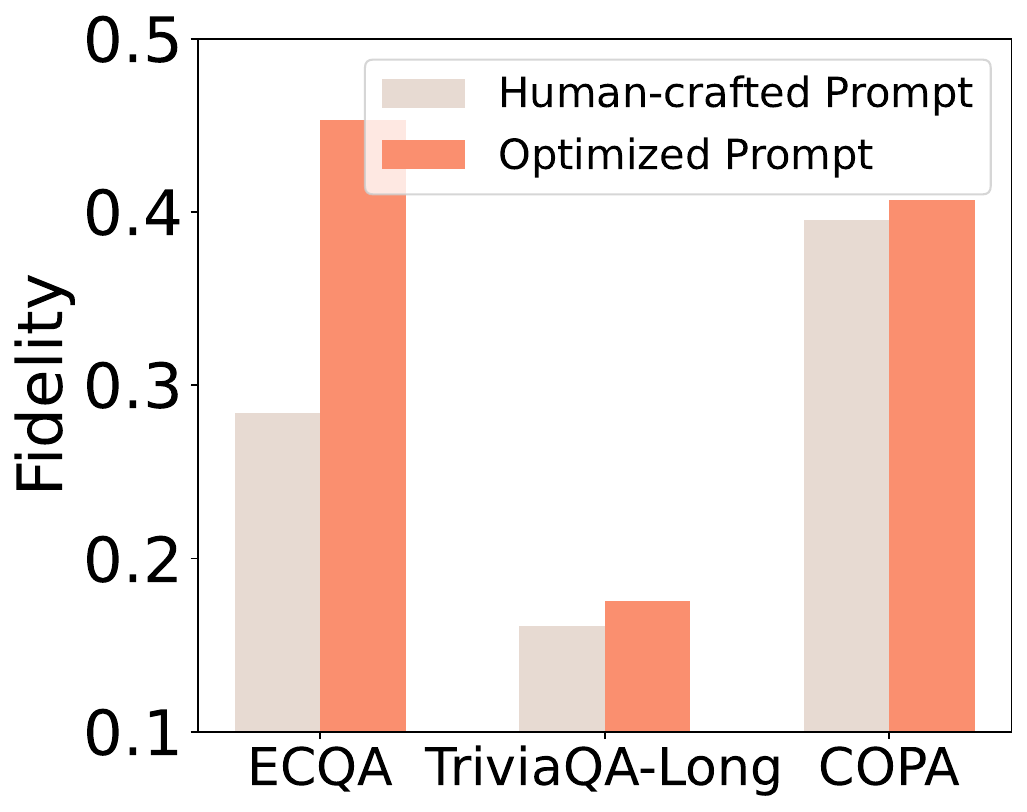}
    \includegraphics[width=0.235\linewidth]{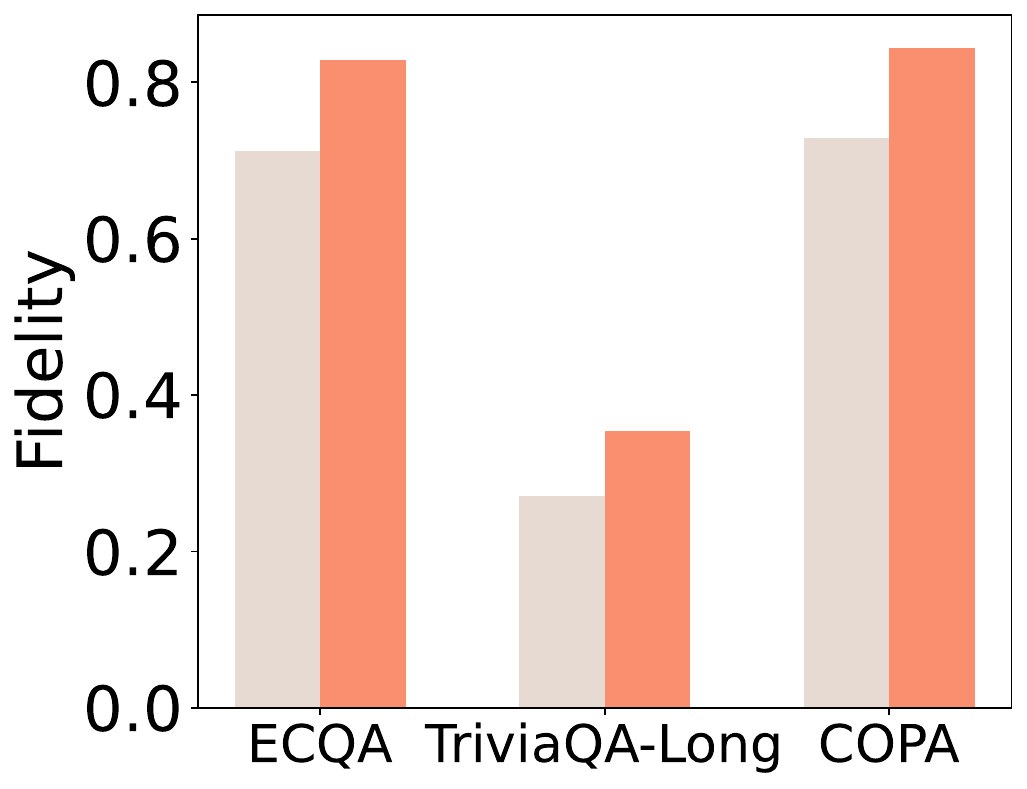}
    \label{fig:opti_trg}
}
\subfigure[Transferability results of trigger prompts]
{
    \includegraphics[width=0.235\linewidth]{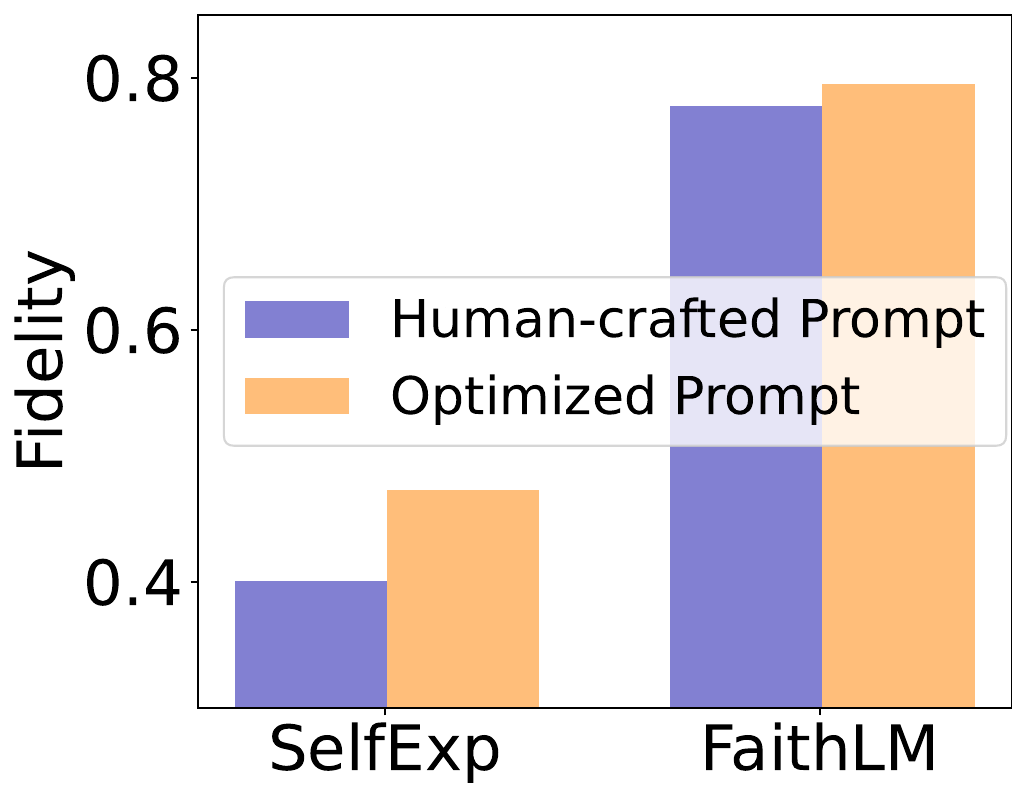}
    \includegraphics[width=0.235\linewidth]{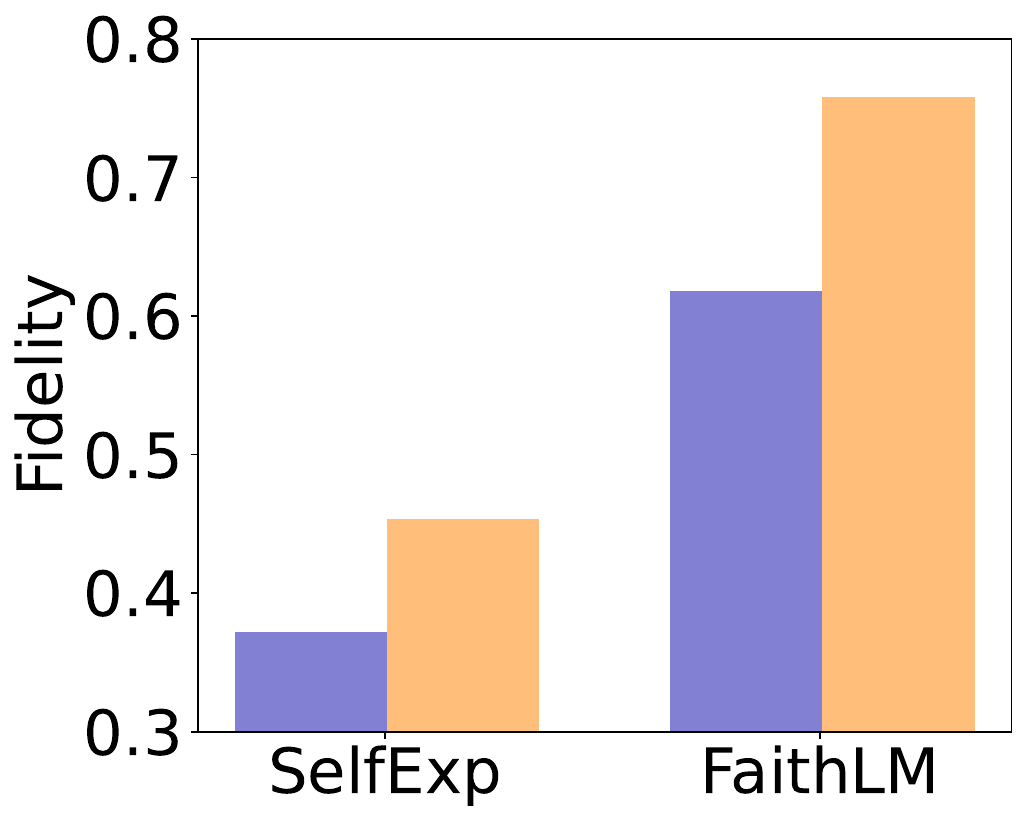}
    \label{fig:trans}
}
\vspace{-0.3cm}
\caption{Assessment on the adaptation of the optimized explanation trigger prompts. Figure (a) reveals the robustness evaluation, and figure (b) illustrates the results on transferability.}
\label{fig:nonfac}
\end{figure*}

\begin{figure}[t]
\centering
\subfigure[Evaluated by GPT-4o Evaluator]
{
    \includegraphics[width=0.48\linewidth]{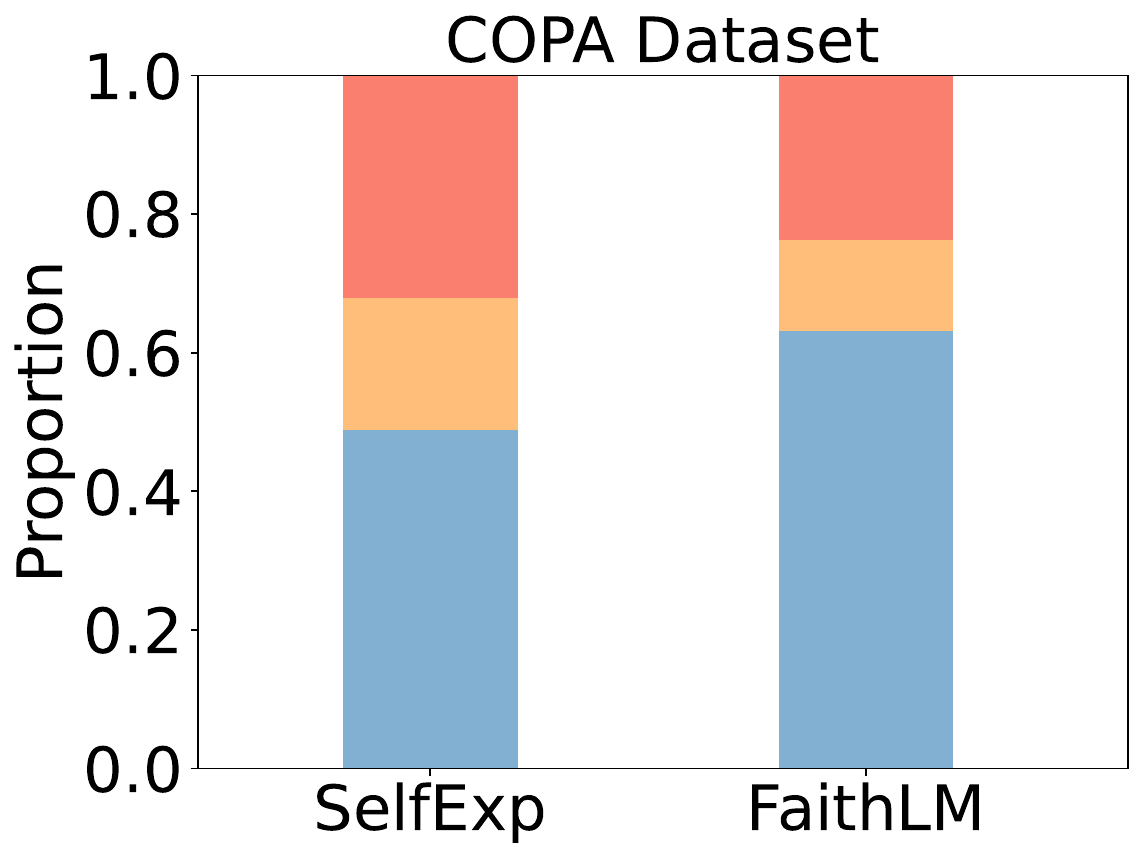}
    \includegraphics[width=0.48\linewidth]{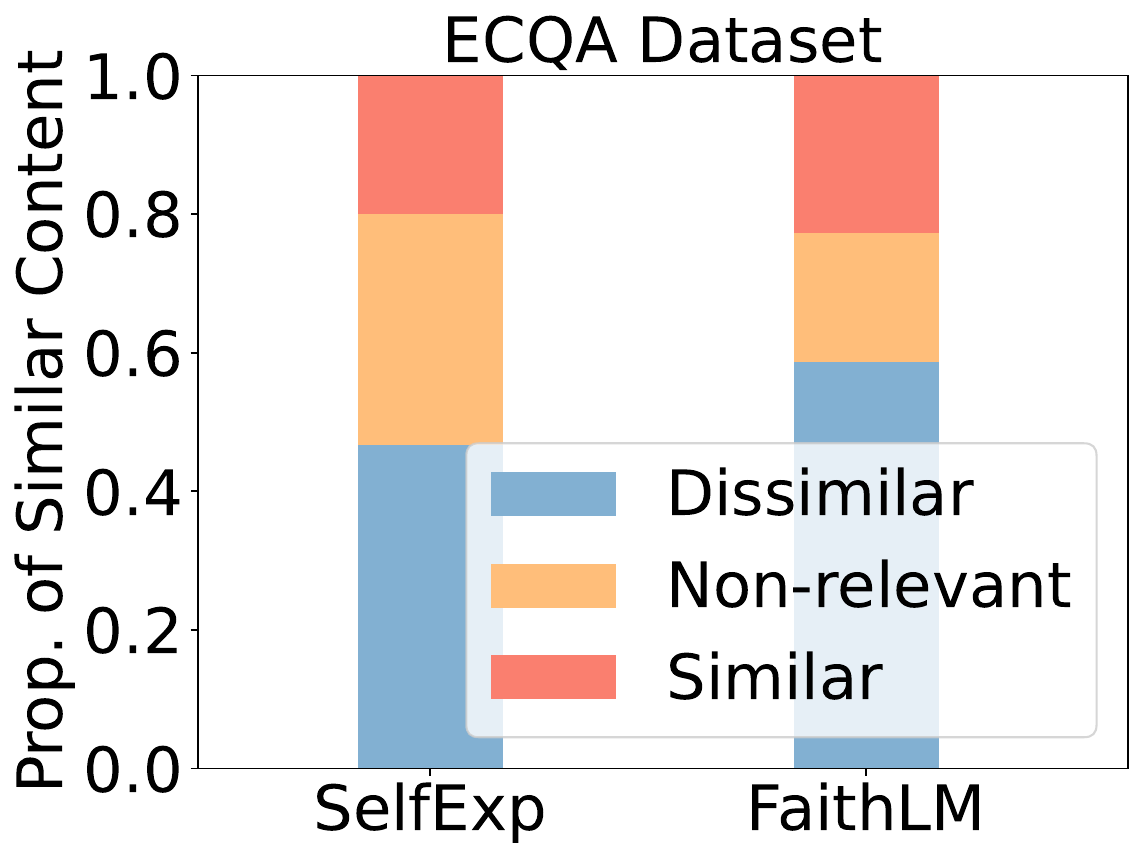}
}
\vspace{-0.25cm}
\caption{Ablation studies on evaluating contrary hint.}
\vspace{-0.25cm}
\label{fig:gt}
\end{figure}

\begin{itemize}[leftmargin=*, topsep=0pt]
    \itemsep=-0.5pt
    \item \textbf{Fidelity Evaluation.} The results in Figure~\ref{fig:local_exp_accum} demonstrate that \Algnameabbr{} achieves significantly higher fidelity scores across all three datasets compared with two baselines after 20 steps of optimization. Moreover, the optimization curve of fidelity demonstrates that 20 rounds of optimization are sufficient to converge. A similar phenomenon occurs across different settings of explainers and targeted LLMs. Additional results are provided in Appendix~\ref{appendix:trg}.
    \item \textbf{Truthfulness Evaluation.} To evaluate the truthfulness of explanations, we show the proportions of ``similar to ground-truth explanations" in the ECQA dataset, as depicted in Figure~\ref{fig:local_exp_gt}. 
    We leverage GPT evaluators and well-trained NLI evaluators to assess whether the given explanations are within similar content to ground-truth explanations. 
    The results show that \Algnameabbr{}'s explanations are more consistent with the ground-truth NL explanations, indicated by a larger proportion of ``similar to ground-truth explanations" generated by \Algnameabbr{} than baseline methods. 
    
\end{itemize}

\vspace{0.1cm}
\noindent\textbf{Efficacy of Explanation Trigger Prompts.}
We first show prompt optimization curves on three different datasets, and then leverage the optimal explanation trigger prompts to generate explanations via \Algnameabbr{}. In the experiments, we randomly select 15 instances from the training dataset in each optimization round, and compute the average fidelity scores of the newly derived trigger prompts. After the progress is terminated, we evaluate the optimized trigger prompts on the testing set. The optimization step is uniformly established at 50 rounds across different explainer and targeted LLMs.

\begin{itemize}[leftmargin=*]
    \itemsep=-0.5pt
    \item \textbf{Trigger Prompt Optimization Curve.} Figure~\ref{fig:opt_curve} demonstrates the optimization curves of three datasets. We display the explainer as GPT-3.5-Turbo and Claude-2 and the explainer as Vicuna-7B. We observe that the optimization curve exhibits a generally ascending trend as the step progresses, interspersed with multiple waves throughout the optimization procedure. This indicates that \Algnameabbr{} generates better explanation trigger prompts after the optimization. More results of optimization curves on remaining datasets are provided in Appendix~\ref{appendix:trg}.
    
    \item \textbf{Explanation Generation by Optimized Trigger Prompts.}  We utilize the optimized explanation trigger prompts to generate explanations following Algorithm~\ref{alg:xllm-exp}. The results are displayed in Figure~\ref{fig:opti_trg}, including the experiments conducted using all three datasets with Claude-2 as the explainer and Vicuna-7B as the targeted LLM. We observe that optimized explanation trigger prompts obtain higher fidelity scores than the initial human-crafted trigger prompt in generating explanations. This trend is consistent across all datasets, regardless of whether the explanations are refined by Algorithm~\ref{alg:xllm-exp}. 
\end{itemize}


\subsection{Transferability of Trigger Prompt (RQ2)}
We assess the transferability of ultimately optimized trigger prompts across different unseen datasets within the same domain, as depicted in Figure~\ref{fig:trans}. Specifically, we transfer the optimized trigger prompts from the ECQA to the Social-IQA dataset, and from the COPA to the XCOPA datasets, without any additional optimization. Specifically, the Social-IQA dataset is dedicated to commonsense question-answering (similar to the ECQA dataset), while the XCOPA dataset specializes in causal reasoning (similar to the COPA dataset). We adopt the Vicuna-7B as the targeted LLM, and Claude-2 as the explainer on these transfer tasks. The fidelity of the derived NL explanation on the target dataset is shown in Figure~\ref{fig:nonfac}(b). The optimized trigger prompts show better explanation efficacy than human-crafted prompts when it is transferred in similar domain. This shows that the optimized trigger prompts generated by \Algnameabbr{} possess a great data transferability.

\subsection{Ablation Studies on contrary hint (RQ3)}
\label{sec:abl_non}
The quality of contrary hints $\neg\mathcal{E}_\text{NL}$ determines the efficacy of \Algnameabbr{}.
We leverage the powerful LLMs as the LLM agent to generate contrary hints, requesting the delivery of high-quality opposite-meaning outputs from their original NL explanations. In this section, we evaluate the quality of contrary hints, aiming to observe the semantic differences between the original NL explanations and their contrary hints. 
To examine the quality, we employ one GPT-4o classifier and two well-trained NLI classifiers, Roberta-Large and XLNet-Large~\cite{nie-etal-2020-adversarial}. We leverage each classifier to distinguish whether the relationship between the ``original NL explanations" and ``contrary hints" belong to the category of ``similar meaning (entailment)," ``dissimilar meaning (contradiction)," or ``non-relevant (neutral)." We follow the evaluation settings from~\cite{liu2023gpteval} on GPT-classifier with evaluation prompt provided in Table~\ref{tab:eval}.

The results are shown in Figure~\ref{fig:gt} and Figure~\ref{fig:cont_class} under the randomly sampled 100 instances from the ECQA and COPA datasets. We observe that the two NLI classifiers achieve up to 86\% and 82\% in the ``dissimilar meanings"" category on the ECQA and COPA datasets, respectively. The results of the GPT-classifier demonstrate that the derived explanations from SelfExp obtain more non-faithful information than \Algnameabbr{}, risking the LLM agent of Fidelity Evaluator in generating non-relevant information as the contrary hints. Case studies are provided in Appendix~\ref{appendix:case} to show the informativeness and readability of contrary hints.



\section{Conclusion}

In this paper, we introduce \Algnameabbr{} to explain the decision-making process of LLMs, instead of providing reasoning or self-refinement feedback as model explanation. Specifically, \Algnameabbr{} employs a fidelity enhancement strategy to progressively refine the fidelity of derived explanations and explanation trigger prompts. \Algnameabbr{} conducts an iterative process to improve the fidelity of derived explanations. Theorem~\ref{thm:intervention} establishes the contrary-hint score as a valid measure of faithfulness.
Experimental results demonstrate the effectiveness of \Algnameabbr{}, and better alignment with the ground-truth explanations. This suggest that the decision-making process are truly reflected. For future work, we plan to extend \Algnameabbr{} in healthcare, where the needs for transparency is critical given the growing reliance on black-box LLMs. 


\clearpage
\section{Limitations}
\label{apdx:limit}
\textcolor{black}{
One significant limitation of \Algnameabbr{} associated with the carbon emissions during the experiments. To generate better fidelity and truthfulness of the derived explanations, \Algnameabbr{} requires iterating a few rounds of optimization during the explanation derivation. This leads to extra computational resources to proceed. The extra computational power is thus required for optimizing \Algnameabbr{} leads to considerable energy consumption, which, in turn, results in a significant carbon footprint. As the demand for more sophisticated LLMs continues to grow, so does their environmental impact. This limitation underscores the urgent need to explore and adopt more sustainable practices and technologies in the development and learning \Algnameabbr{} with fewer optimization steps to mitigate their ecological footprint. 
}
\vspace{-0.5cm}

\bibliography{paper}

\clearpage
\appendix
\onecolumn

\section*{Appendix}

\section{Proof of Theorem}
\label{apdx:proof}

\setcounter{theorem}{0}
\begin{theorem}[\textbf{Latent-Context Intervention Validity for Faithfulness}]
\label{thm:intervention}
Let $f:\mathcal{X}\!\times\!\mathcal{C}\!\to\!\Delta(\mathcal{Y})$ be a language model mapping an input $X$ and latent context $C$ to a predictive distribution over an output space $\mathcal{Y}$. 
Let $E_{NL}$ denote a natural-language explanation of $f(X;C)$, and let $\lnot E_{NL}$ denote its contrary hint. 
Assume that $E_{NL}$ asserts a proposition about a semantic factor $S_E=s(X,C)$, where $s(\cdot)$ extracts the decision-relevant concept, which is latent or retrieved, that the explanation verbalizes. 
Conditioning on $\lnot E_{NL}$ is equivalent to intervening on this factor while holding $(X,C)$ fixed, i.e.,
$f(X;C\mid\lnot E_{NL}) = f(X;C\mid do(S_E\!\leftarrow\!\bar s))$ 
for some contradictory value $\bar s$, and predictions are invariant to any irrelevant text $R$, so $f(X;C)=f(X\cup R;C)$. 
Defining $S_E(X;C)=D(f(X;C),f(X;C\mid\lnot E_{NL}))$, where $D$ is any strictly proper divergence, we have
\begin{align}
    \notag S_E(X;C)=0 &\iff E_{NL}\text{ is non-faithful for }f(X;C),\\
    \notag S_E(X;C)>0 &\iff E_{NL}\text{ is faithful for }f(X;C).
\end{align}
Hence, the contrary-hint score $S_E$ constitutes a valid empirical estimator of causal faithfulness even when the decision-relevant content is not contained in the observed input $X$ but arises from latent or retrieved context.
\end{theorem}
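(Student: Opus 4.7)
The plan is to convert the biconditional into an algebraic statement about the divergence $D$, using the two structural assumptions built into the hypothesis: the causal equivalence $f(X;C \mid \lnot E_{NL}) = f(X;C \mid do(S_E \leftarrow \bar s))$ and the irrelevance invariance $f(X;C) = f(X \cup R;C)$. First I would pin down a working definition of faithfulness tailored to what the contrary hint actually manipulates, namely: $E_{NL}$ is faithful for $f$ at $(X,C)$ iff the predictive distribution changes when the asserted value $s$ of the latent factor $S_E$ is replaced by its contradictory value $\bar s$, and non-faithful iff $f(X;C \mid do(S_E \leftarrow \bar s)) = f(X;C)$. This matches the semantic content of $E_{NL}$, which only commits to the proposition $S_E = s$ and need not constrain behavior under unrelated interventions.

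Next I would carry out both directions by invoking strict propriety of $D$, which gives $D(P,Q) \geq 0$ with equality iff $P = Q$. For the non-faithful implication, assume $f(X;C \mid do(S_E \leftarrow \bar s)) = f(X;C)$; combining with the causal equivalence assumption yields $f(X;C) = f(X;C \mid \lnot E_{NL})$, and strict propriety then forces $S_E(X;C)=0$. Conversely, $S_E(X;C)=0$ forces $f(X;C \mid \lnot E_{NL}) = f(X;C)$, which by the same causal equivalence gives $f(X;C \mid do(S_E \leftarrow \bar s)) = f(X;C)$, exactly the non-faithfulness condition. The strict-positivity case is then the contrapositive: $S_E(X;C) > 0$ iff $f(X;C \mid \lnot E_{NL}) \neq f(X;C)$, iff intervening on $S_E$ at its contradictory value alters the predictive distribution, iff $E_{NL}$ is faithful. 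The irrelevance-invariance assumption enters here to certify that any shift we observe is attributable to the semantic factor rather than to the surface presence of extra text, by identifying the null case where $\lnot E_{NL}$ would act as an irrelevant token block $R$.

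The main obstacle I anticipate is the interplay between the irrelevance clause and the do-operator clause, and in particular the need to argue that $\lnot E_{NL}$ acts as a \emph{pure} intervention on $S_E$ rather than as a prompt-level perturbation that happens to shift $f$ for unrelated reasons. A clean way to handle this is to treat the irrelevance axiom as the stronger statement that $f$ is invariant under any textual variation that leaves $S_E$ unchanged; then any divergence between $f(X;C)$ and $f(X;C \mid \lnot E_{NL})$ can be attributed solely to the $S_E \!\leftarrow\! \bar s$ component, and the biconditional reduces to a tautology of strict propriety. Once this is made rigorous, the conceptual content of the theorem is entirely in framing faithfulness as invariance under the specific contradictory intervention, with the rest of the argument being mechanical. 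I would close by noting that the proof uses no properties of $D$ beyond strict propriety, so the result holds uniformly for KL, total variation, and decision-level $0/1$ divergences, which is what makes the contrary-hint score a valid estimator across the evaluators used in the empirical sections.
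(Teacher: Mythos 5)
Your proposal is correct and follows essentially the same route as the paper's proof: both anchor faithfulness to the causal definition (the prediction changes under the contradictory intervention $do(S_E\leftarrow\bar s)$), use the assumed equivalence $f(X;C\mid\lnot E_{NL})=f(X;C\mid do(S_E\leftarrow\bar s))$ to transfer that condition to the conditioned distribution, and close both directions with strict propriety of $D$ ($D(P,Q)=0$ iff $P=Q$), with the irrelevance axiom certifying that any shift is attributable to the semantic factor. Your version is, if anything, slightly tighter in pinning faithfulness to sensitivity at the specific value $\bar s$ rather than the paper's looser ``lies on a causal path,'' but this is a refinement of the same argument, not a different one.
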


\begin{proof}
By the causal definition of faithfulness, an explanation is faithful if and only if altering the truth value of the decision-relevant semantic factor changes the model’s prediction while holding other causes fixed. Because $E_{NL}$ asserts a proposition about $S_E=s(X,C)$, and conditioning on $\lnot E_{NL}$ implements $do(S_E\!\leftarrow\!\bar s)$ with $(X,C)$ fixed, we have $f(X;C\mid\lnot E_{NL})=f\!\big(X;C \,\big|\, do(S_E\!\leftarrow\!\bar s)\big)$. If $E_{NL}$ is non-faithful, then $S_E$ has no causal influence on $Y$ under $(X,C)$ and the intervention leaves the predictive distribution unchanged, yielding $S_E(X;C)=0$. If $E_{NL}$ is faithful, $S_E$ lies on a causal path to $Y$ under $(X,C)$ and the intervention changes the predictive distribution; strict propriety of $D$ implies $S_E(X;C)>0$. Invariance under irrelevant spans follows by the stated stability condition and applies to both terms inside $D$.
\end{proof}

\begin{corollary}[\textbf{Robustness and Monotonicity}]
\label{cor:robustness}
Under contextual stability, the contrary-hint score remains invariant to irrelevant input variations, satisfying 
\[
S_E(X;C) = S_E(X \cup R;C)
\quad \text{for any semantically irrelevant } R.
\]
Furthermore, if an iterative procedure produces a sequence of explanations 
$\{E_{NL}^{(t)}\}$ such that 
$S_E^{(t+1)}(X;C) \ge S_E^{(t)}(X;C)$ 
for all iterations prior to convergence or a decision flip, then the sequence is non-decreasing in causal faithfulness. 
Consequently, any iteration with $S_E^{(t)}(X;C) > 0$ corresponds to a faithful explanation.
\end{corollary}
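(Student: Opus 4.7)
The plan is to derive both claims as direct consequences of Theorem~\ref{thm:intervention}, treating robustness as an application of the contextual-stability condition to each argument of the divergence $D$, and monotonicity as a reinterpretation of the hypothesized score increase through the theorem's biconditional characterization of faithfulness. No new machinery should be needed; the work is in carefully lifting the invariances from Theorem~\ref{thm:intervention} onto the \emph{intervened} predictive distribution as well.

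For the robustness claim, I would first invoke the stability assumption $f(X;C)=f(X\cup R;C)$ on the unintervened term inside $D$. I would then argue the same invariance extends to $f(X;C\mid\lnot E_{NL})$: since $R$ is semantically irrelevant, it carries no information about the decision-relevant factor $S_E=s(X,C)$, so $s(X\cup R,C)=s(X,C)$ and the intervention $do(S_E\!\leftarrow\!\bar s)$ fixes the same contradictory value regardless of whether $R$ is appended. Substituting the two equalities into the divergence expression $S_E(X;C)=D\bigl(f(X;C),\,f(X;C\mid\lnot E_{NL})\bigr)$ then yields $S_E(X;C)=S_E(X\cup R;C)$.

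For the monotonicity claim, the non-decreasing property is immediate from the hypothesis $S_E^{(t+1)}(X;C)\ge S_E^{(t)}(X;C)$. The substantive content is the second sentence: for any iterate with $S_E^{(t)}(X;C)>0$, the biconditional of Theorem~\ref{thm:intervention} furnishes causal faithfulness of $E_{NL}^{(t)}$ for $f(X;C)$. Thus the monotone sequence of contrary-hint scores lifts to a non-decreasing sequence in causal faithfulness: once the score becomes positive, it remains positive by the monotone hypothesis, and every subsequent explanation is faithful in the sense defined by Theorem~\ref{thm:intervention}.

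The main obstacle I foresee is justifying that the intervention $do(S_E\!\leftarrow\!\bar s)$ commutes with concatenation of irrelevant text, i.e.\ that $f(X;C\mid\lnot E_{NL})=f(X\cup R;C\mid\lnot E_{NL})$. This requires a slightly strengthened reading of contextual stability than is stated explicitly in Theorem~\ref{thm:intervention}: namely, that irrelevance of $R$ is preserved under the intervention on $S_E$, or equivalently, that $s(\cdot)$ is invariant to the insertion of semantically irrelevant spans. I would make this explicit as a minor stability postulate rather than attempt to derive it, since it is the natural analogue of the input-invariance condition already assumed in the theorem. Beyond this clarification, the argument is a direct substitution and requires no delicate computation.
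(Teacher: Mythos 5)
Your proposal is correct and follows essentially the same route as the paper, which treats the corollary as an immediate consequence of Theorem~\ref{thm:intervention}: the paper's own proof simply asserts that ``invariance under irrelevant spans follows by the stated stability condition and applies to both terms inside $D$,'' and derives faithfulness of any positive-score iterate from the theorem's biconditional. The one gap you flag --- that the stated stability condition $f(X;C)=f(X\cup R;C)$ covers only the unintervened term, so the invariance of $f(X;C\mid\lnot E_{NL})$ under insertion of $R$ needs either the postulate $s(X\cup R,C)=s(X,C)$ or a strengthened stability assumption --- is a real imprecision in the paper that your explicit postulate correctly repairs.
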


\section{Details about Datasets}
\label{appendix:baseline_detail}

The experiments are conducted on the three NLU datasets. The details of the datasets are provided as follows:

\begin{itemize}[leftmargin=*, itemsep=0pt, topsep=-1mm]
    \item \textbf{ECQA~\cite{aggarwaletal2021ecqa}.} ECQA is an extension of the CQA dataset~\cite{talmor-etal-2019-commonsenseqa}. Specifically, based on the CQA dataset, it annotates the positive or negative properties and golden explanations for the QA pairs. Due to API cost budgets, we evaluate our framework on the first 500 instances in the ECQA dataset.
    \item \textbf{TriviaQA LongBench~\cite{2017arXivtriviaqa}.} TriviaQA LongBench (TriviaQA-Long) is a reading comprehension dataset. It includes 300 question-answer-evidence triples sourced from the Longbench~\cite{bai2023longbench} dataset\footnote{https://huggingface.co/datasets/THUDM/LongBench/}. This dataset features question-answer pairs crafted by trivia enthusiasts, accompanied by independently sourced evidence documents, providing supervision for answering these questions.
    \item \textbf{Balanced COPA~\cite{roemmele2011choice, kavumba-etal-2019-choosing}.} The Balanced COPA (COPA) dataset is a collection of 500 questions for commonsense causal reasoning. Each question consists of a premise and two alternatives, where the task is to select the alternative that more plausibly has a causal relation with the premise.
\end{itemize}

\section{Experiment Settings}
\label{apdx:exp_set}
We introduce the experimental settings for evaluating \Algnameabbr{}. Two distinct types of explanation tasks and evaluation settings are as follows.

\paragraph{Fidelity-enhanced Explanation}
In this task, our goal is to produce NL explanations that exhibit a higher fidelity. The fidelity is exploited as a metric to evaluate fidelity. \Algnameabbr{} is evaluated across all testing instances, where an NL explanation is generated for each instance, and the averaged fidelity score is calculated, serving as the reported metric to evaluate fidelity.

\paragraph{Explanation Trigger Prompt Optimization.}
In this task, we aim to optimize the explanation trigger prompt that benefits \Algnameabbr{} in generating better explanations.
The optimization process is conducted on the same dataset, where 30 instances are sampled as a hold-off dataset in each optimization step from the training set. During the optimization process, the fidelity score of a trigger prompt is calculated as the average of the fidelity scores from the selected instances.

\vspace{0.05cm}
\noindent\textbf{Evaluation Metrics.} The quality of the derived NL explanation is evaluated under the fidelity and truthfulness metrics.
The fidelity follows the prior work~\cite{chen2025reasoning}, which observes the discrepancy of the targeted LLMs by incorporating contrary hints to the input.
The evaluation of truthfulness assesses the correlation between the derived NL explanations to the ground-truth explanations
Specifically, we leverage GPT-4o and two well-trained natural language inference (NLI) models, Roberta-Large and XLNet-Large~\cite{nie-etal-2020-adversarial} from the huggingface hub~\cite{wolf2019huggingface}, as the evaluators. With the same evaluators setup, the truthfulness evaluation follows the settings from~\cite{liu2023gpteval}, and uses the evaluation prompt provided in Appendix~\ref{appendix:prompt_eval}. Specifically, the evaluators assess the derived explanations and ground-truth explanations, determining whether the two sentences belong to ``similar content", ``dissimilar content," or ``non-relevant content". Higher the proportion of ``similar content", the more consistent results with ground-truth NL explanations.

\vspace{0.05cm}
\noindent\textbf{Implementation Details.} 
In the experiments, we explore two variants of LLMs as the targeted LLMs $f(\cdot)$: Vicuna-7B~\cite{vicuna2023} and Phi-2~\cite{phi2}, two types of LLMs as the explainers $\textsl{g}(\cdot)$ in \Algnameabbr{}: GPT-3.5-Turbo and Claude-2~\cite{claude}. The LLM agent for generating the contrary hints takes the same LLMs as those used by the explainers. All reported results are calculated from the average scores of 3 times repetitions with the grid search on the performance. The settings for predictors are uniform, with Phi-2 (2.7B) and Viucua-7B receiving identical hyperparameter configurations during the experiments conducted in this study.

\section{Related Work}

\subsection{Post-hoc Explanation}
Post-hoc explanation techniques have undergone significant development and discussion, driven by the widespread adoption of black-box ML models across various data modalities. A multitude of post-hoc algorithms has been introduced from two aspects: local and global explanations~\cite{christoph2022xai, du2019techniques}. Explanations aim to explain the reasoning behind an individual model for each input instance, while global explanations aim to uncover the overall functioning of a complex model~\cite{chuang2023efficient}. Considering various purposes of explanation, the explanation techniques mainly showcase the explanation from two perspectives, including feature attributions and counterfactual examples. Feature attribution aims to provide users with important scores for each feature's impact on model predictions, while counterfactual examples aim to offer alternative instances that explicitly assist users in grasping the model's decision-making process. In recent years, with the growing proficiency and wide usage of black-box LLMs, especially closed-source LLMs service, post-hoc explanations have become increasingly prominent and have garnered significant attention in NLP research due to the inaccessibility of LLMs' model weights and structure~\cite{zhao2023explainability}. 

\subsection{Explainability of LLMs}
The majority of explanation efforts in LLM research have centered on delivering explanations. One group of studies calculates importance scores for specific tokens~\cite{lopardo2023faithful, huang2023can}, another line of progress generates NL explanations by leveraging the pre-trained LLMs with internal model knowledge sources~\cite{kumar2020nile,chen2023lmexplainer, menon2023mantle}, the other group of work leverages LLMs themselves to generate chain-of-thought (CoT) reasoning~\cite{lanham2023measuring, radhakrishnan2023question, chen2023models, chen2023models} as the self-explanations through the one feed-forward inference process. Furthermore, some studies aim to yield counterfactual explanations by pre-trained LLMs to assist users in better understanding the decision-making process from LLMs~\cite{chen2021kace, chen2023models}. Although NL explanations offer fantastic human-understandable insights than token-wise explanations, the explanations can lose their fidelity via one feed-forward inference process of pre-trained LLMs. Unreliability and non-fidelity of NL explanations are still a concern~\cite{ye2022unreliability, turpin2023language}. Given our primary aim of producing faithful explanations, our efforts are to generate NL explanations to improve the likelihood of accurately representing the decision-making process of LLMs.

\subsection{LLMs as Optimizers}
LLMs as optimizers is a novel paradigm, describing optimization problems in natural language and utilizing the reasoning capabilities of LLMs for optimizing~\cite{yang2023large}.
Depicting optimization problems in natural language enables the optimization of diverse tasks without defining formal specifications, such as prompt optimization~\cite{yang2023large, cheng2023black, guo2023connecting}, agent learning~\cite{shinn2023reflexion}, and model labeling~\cite{thomas2023large}.
Based on this optimization paradigm, our work introduces a generative explanation framework with a novel estimation method of sentence-level fidelity.

\newpage
\section{Hyper-parameter Settings of \Algnameabbr{}}
\label{appendix:hyp}

The hyper-parameters of \Algnameabbr{} are given in Table~\ref{tab:hyperparam}. The configuration for explainers is consistent across Claude-2 and GPT-3.5-Turbo, provided that the parameters are adjustable. Likewise, the settings for predictors are uniform, with Phi-2 and Viucua-7B receiving identical hyperparameter configurations during the experiments conducted in this study.

\begin{table}[h!]
\vspace{2mm}
\centering
\begin{tabular}{l|l|c|c|c}
\toprule
& Dataset & ECQA & TriviaQA-Long & COPA \\
\midrule
\multirow{3}{*}{\makecell[l]{Fidelity-enhanced \\Optimization}} 
& Optimization Steps & 20 & 20 & 20 \\
& Temperature of Predictor LLMs & 0.7 & 0.5 & 0.7 \\
& Temperature of Explainer LLMs & 0.9 & 0.9 & 0.9 \\
& Top-P of Explainer LLMs & 0.9 & 0.9 & 0.9 \\
\midrule
\multirow{3}{*}{\makecell[l]{Trigger-oriented \\Optimization}}
& Optimization Steps & 50 & 100 & 100 \\
& Sampled Instances & 30 & 30 & 30 \\
& Temperature of Predictor LLMs & 0.7 & 0.5 & 0.7 \\
& Temperature of Explainer LLMs & 0.9 & 0.9 & 0.9 \\
& Top-P of Explainer LLMs & 0.9 & 0.9 & 0.9 \\
\bottomrule
\end{tabular}
\vspace{3mm}
\caption{Hyper-parameters and optimization settings in \Algnameabbr{}.}
\label{tab:hyperparam}
\end{table}

\section{Computation Infrastructure and Costs}
\label{apx:infra}

\subsection{Computation Infrastructure}
For a fair comparison of testing algorithmic throughput, the experiments are conducted based on the following physical computing infrastructure in Table~\ref{tab:computing_infrastructure}. 

\begin{table}[h!]
\centering
\begin{tabular}{l|c}
\toprule
Device Attribute & Value \\
\midrule
Computing infrastructure & GPU \\
GPU model & Nvidia-A40 \\
GPU number & 1 \\
GPU Memory & 46068 MB \\
\bottomrule
\end{tabular}
\vspace{1mm}
\caption{Computing infrastructure for the experiments.}
\label{tab:computing_infrastructure}
\end{table}

\subsection{Computation Costs}
The computational costs associated with \Algnameabbr{} primarily differ from the inference costs of local LLMs and the expenses related to API-accessed LLMs. The computational costs depend on the parameter scale and variants of LLMs used in the \Algnameabbr{} framework, shown in Table~\ref{tab:cost_gpt} and \ref{tab:cost_vicuna}.

\begin{table}[h!]
\centering
\begin{tabular}{l|c|c|c}
\toprule
 & ECQA & TrivaQA & COPA \\
\midrule
Execution Time (Sec.) & $\sim$3 & $\sim$5 & $\sim$3 \\
Execution Cost (\$) & $\sim$0.01 & $\sim$0.04 & $\sim$0.01 \\
\bottomrule
\end{tabular}
\vspace{1mm}
\caption{Computing costs of \Algnameabbr{} with GPT-3.5 on each dataset.}
\vspace{-4mm}
\label{tab:cost_gpt}
\end{table}

\begin{table}[h!]
\centering
\begin{tabular}{l|c|c|c}
\toprule
 & bs=32 & bs=64 & bs=96 \\
\midrule
Execution Time (Sec.) & $\sim$3 & $\sim$5 & $\sim$3 \\
Memory Cost (GB) & $\sim$28GB & $\sim$43GB & $\sim$59GB \\
\bottomrule
\end{tabular}
\vspace{1mm}
\caption{Computing costs of \Algnameabbr{} with Vicuna-7B under different batch size (bs).}
\label{tab:cost_vicuna}
\end{table}

\newpage
\section{Additional Experimental results of \Algnameabbr{}}
\label{appendix:trg}

\subsection{Optimization Procedure of derived explanations}
We demonstrate more evaluation results on derived explanations from \Algnameabbr{}. The outcomes depicted in Figure~\ref{appendix:fig:local_exp_accum} reveal that \Algnameabbr{} attains notably higher fidelity scores across all three datasets following 20 steps of optimization. Additionally, Figure~\ref{appendix:fig:local_exp_accum} illustrates the evolution of the optimization process during the generation of explanations.

\begin{figure*}[h]
\centering
\subfigure{
\includegraphics[width=0.3\linewidth]{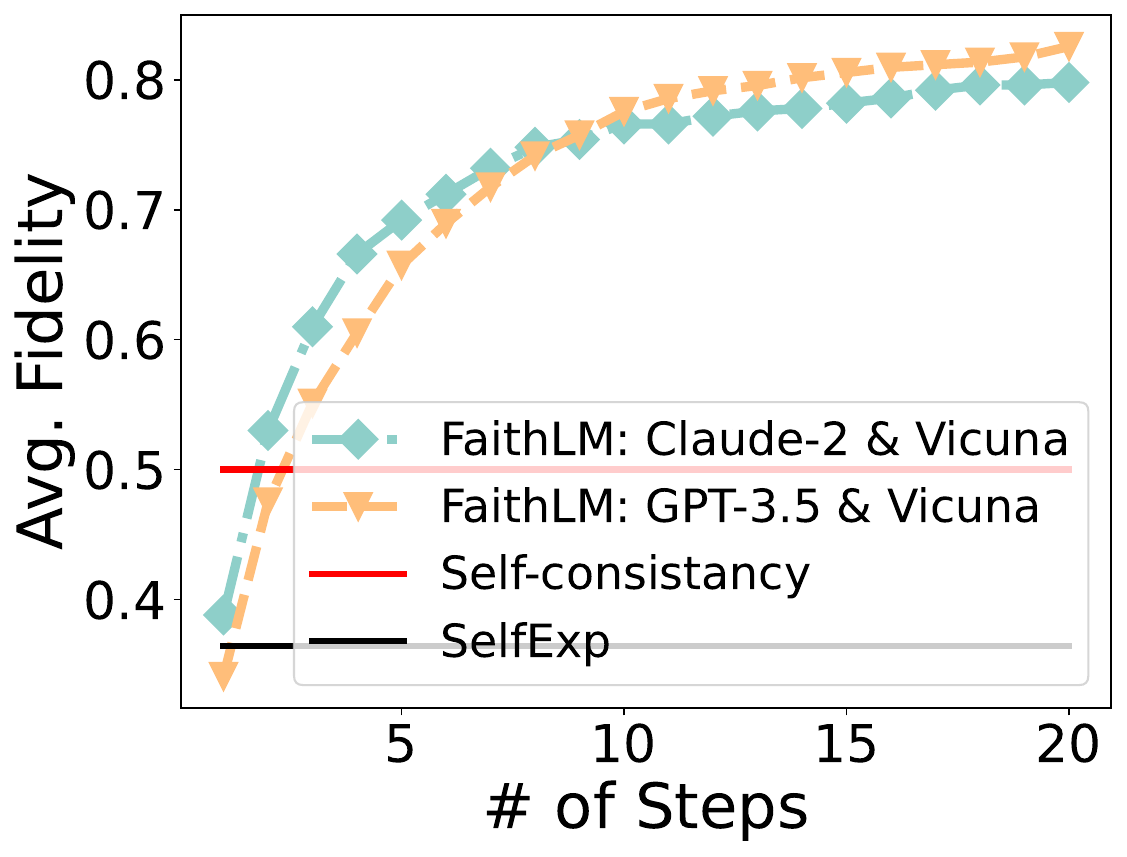}
}~~
\subfigure{
\includegraphics[width=0.3\linewidth]{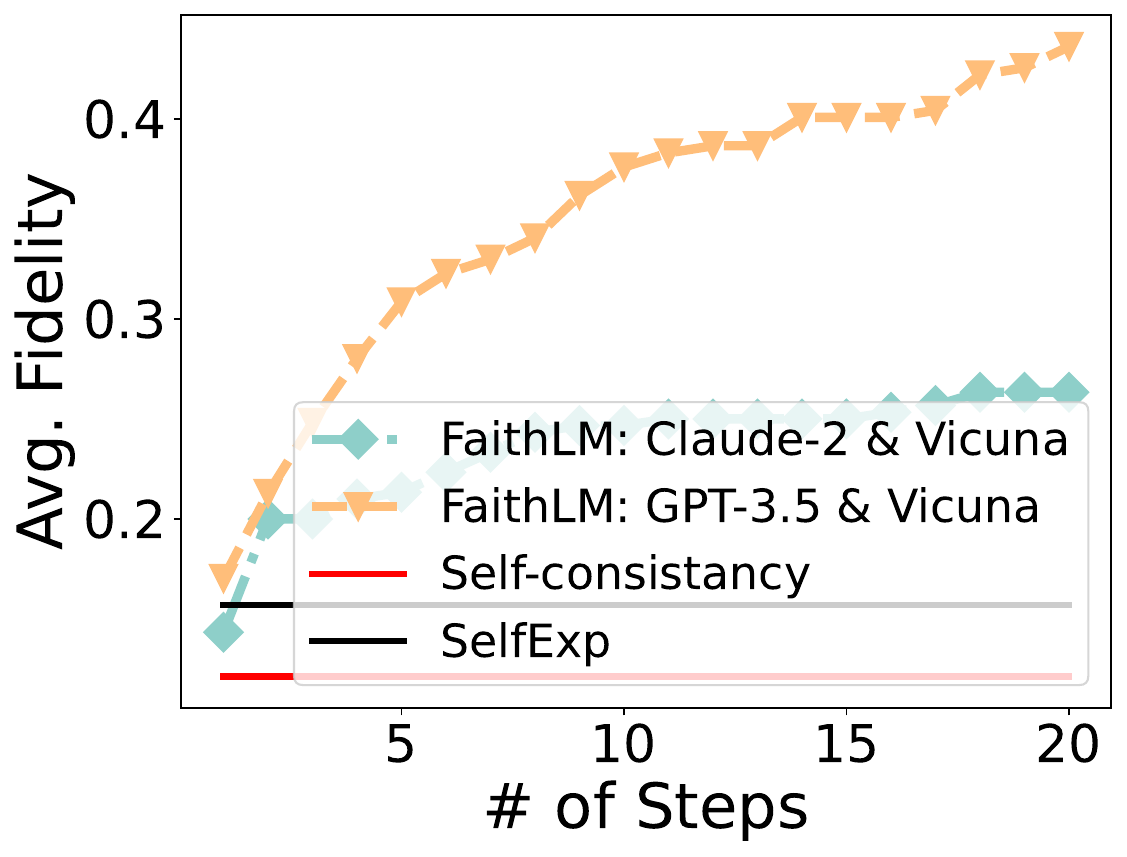}
}~~
\subfigure{
\includegraphics[width=0.3\linewidth]{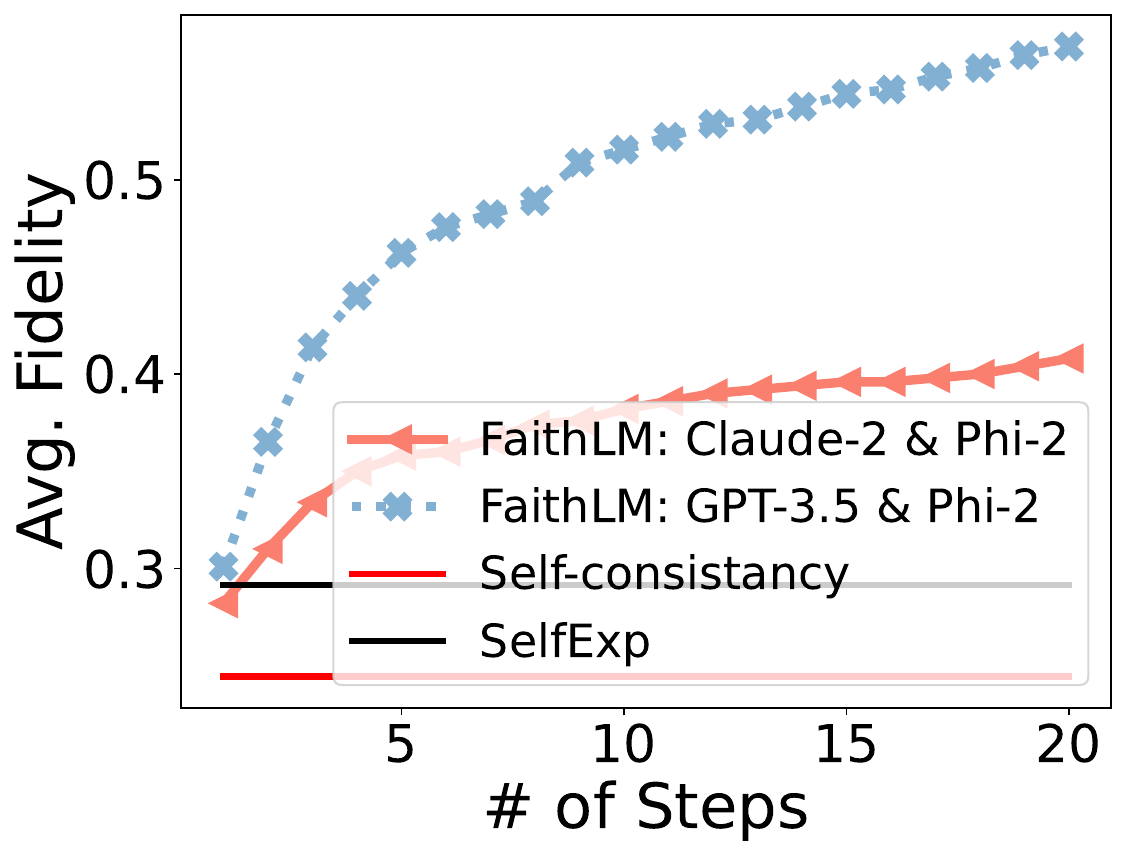}
}
\vspace{-0.3cm}
\caption{The fidelity evaluation of derived explanations from \Algnameabbr{} under different settings of predictors and explainers.}
\label{appendix:fig:local_exp_accum}
\end{figure*}

\subsection{Additional Optimization Curve of Explanation Trigger Prompt}
We demonstrate more evaluation results on the optimization curve of explanation trigger prompts of \Algnameabbr{}. The optimization curve shown in Figure~\ref{appendix:fig:opt_curve} generally displays an upward trend with the progression of steps, interspersed with several fluctuations throughout the optimization process. This suggests that \Algnameabbr{} can successfully generate improved explanation trigger prompts after optimization.

\begin{figure*}[h]
\centering
\subfigure{
\includegraphics[width=0.31\linewidth]{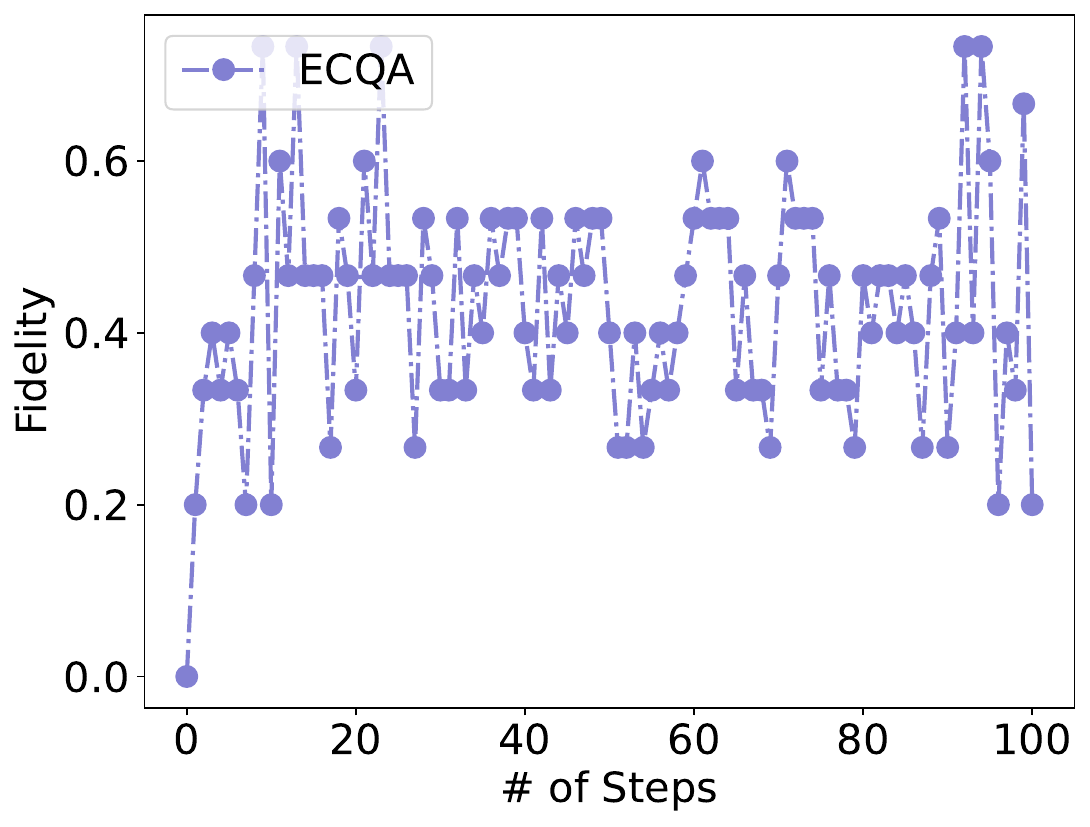}
}~~
\subfigure{
\includegraphics[width=0.32\linewidth]{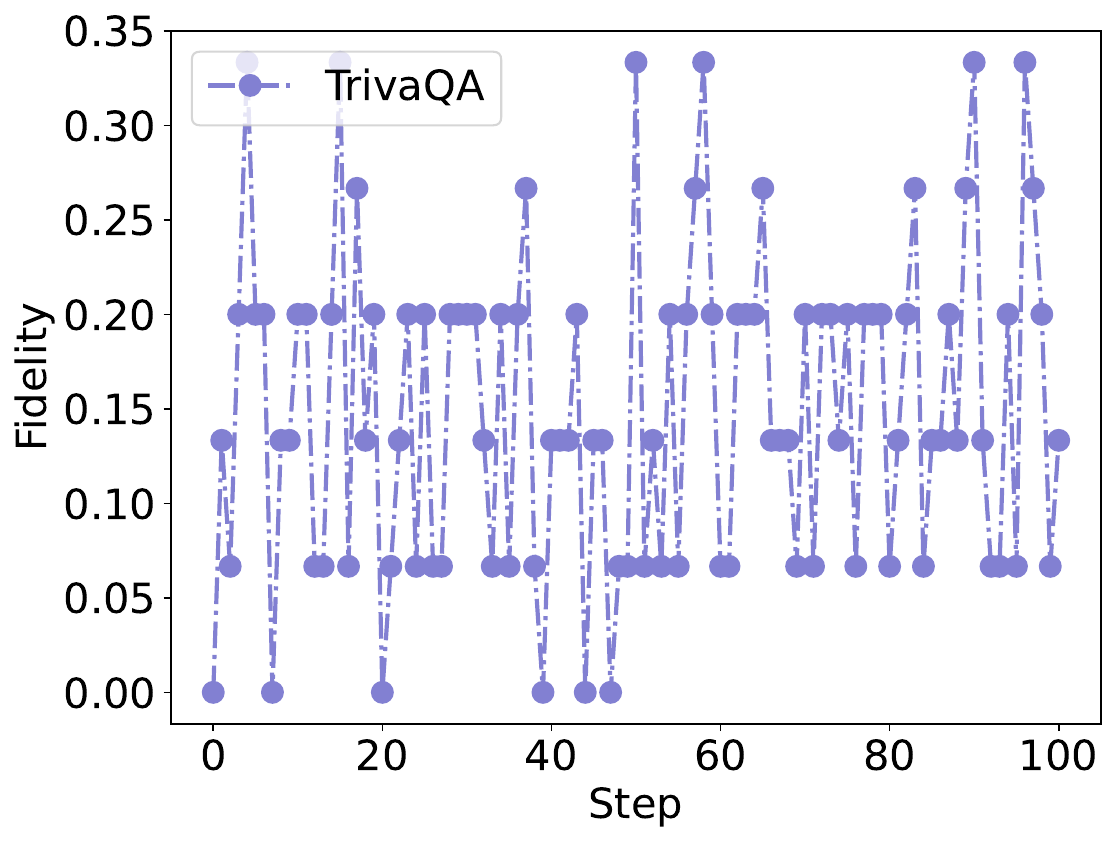}
}~~
\subfigure{
\includegraphics[width=0.31\linewidth]{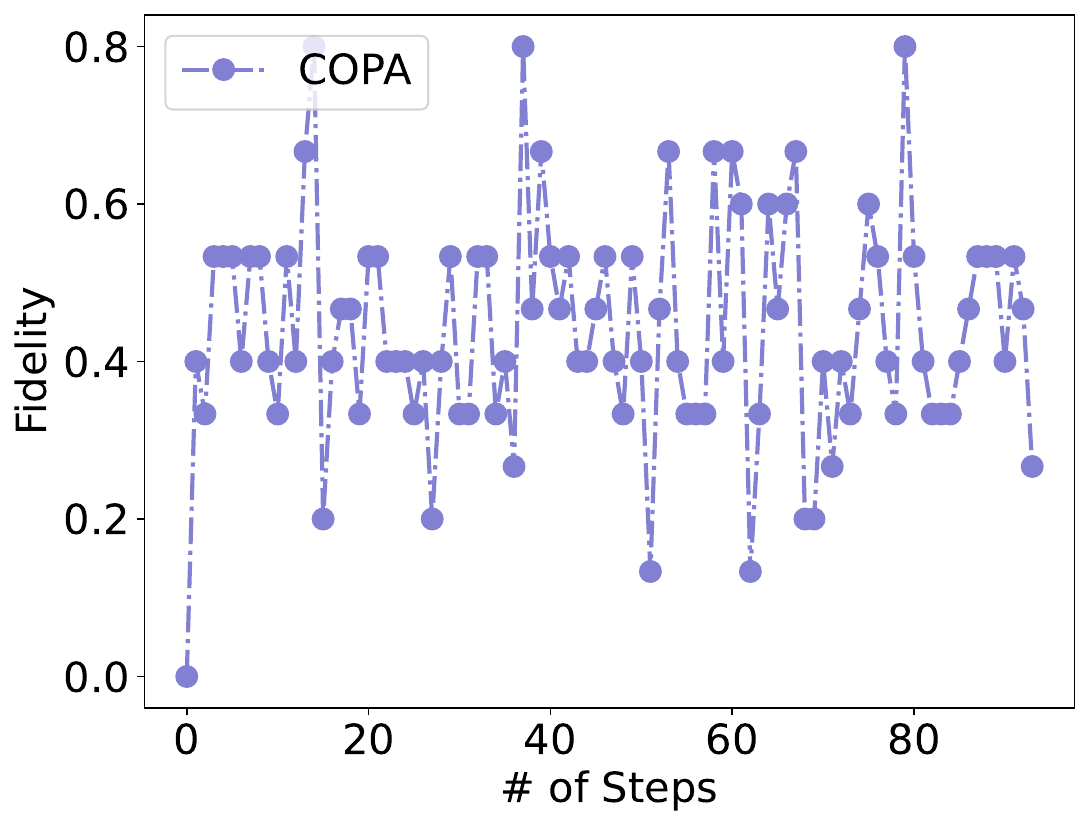}
}
\vspace{-0.3cm}
\caption{The optimization curve of explanation trigger prompts on ECQA (left), TriviaQA-Long (middle), and COPA dataset (right). }
\label{appendix:fig:opt_curve}
\end{figure*}

\subsection{Additional Experiments on Diverse Domains of Dataset}
We further have conducted additional experiments on one new MedMCQA dataset~\cite{pmlr-v174-pal22a} in the healthcare domain. We evaluate the \Algnameabbr{} using a fidelity assessment under Natural Language Explanation Generation settings. All experimental configurations follow the settings in Section~\ref{sec:exp}. The experimental results are shown in the table below. We observe that \Algnameabbr{} outperforms the baseline method, which is consistent with the experimental results across other domain datasets that were evaluated in our work.

\begin{table}[h!]
\centering
\begin{tabular}{l|c|c|c}
\toprule
 & \texttt{SelfExp} & \texttt{Self-consistency} & \Algnameabbr{} \\
\midrule
Fidelity & 0.6956 & 0.4715 & \textbf{0.9565} \\
\bottomrule
\end{tabular}
\vspace{3mm}
\caption{Additional experimental results on MedMCQA dataset.}
\vspace{-3mm}
\end{table}

\clearpage
\subsection{Additional Experiments on Truthfulness Evaluation}
We evaluate all baseline methods and \Algnameabbr{} under multiple settings to assess how closely the derived explanations match the ground-truth rationales. A GPT-based evaluator assigns a GPT-Score from 1 to 5, with higher values indicating greater semantic similarity. Explanations tagged as “similar content” or scoring near 5 are treated as matches.
We report the truthfulness of generated explanations in Figure~\ref{fig:local_exp_gt_appendix}. A GPT-based evaluator assigns a truthfulness score from 1 to 5 by checking factual consistency with the input, the task label, and commonsense knowledge. Explanations tagged as factually consistent or scoring near 5 are counted as truthful. Across all settings, \Algnameabbr{} attains the highest mean truthfulness score and the largest fraction of truthful explanations, indicating that our optimization improves not only fidelity but also factual quality of the produced rationales.

\begin{figure}[h]
    \centering
    \includegraphics[width=0.35\linewidth]{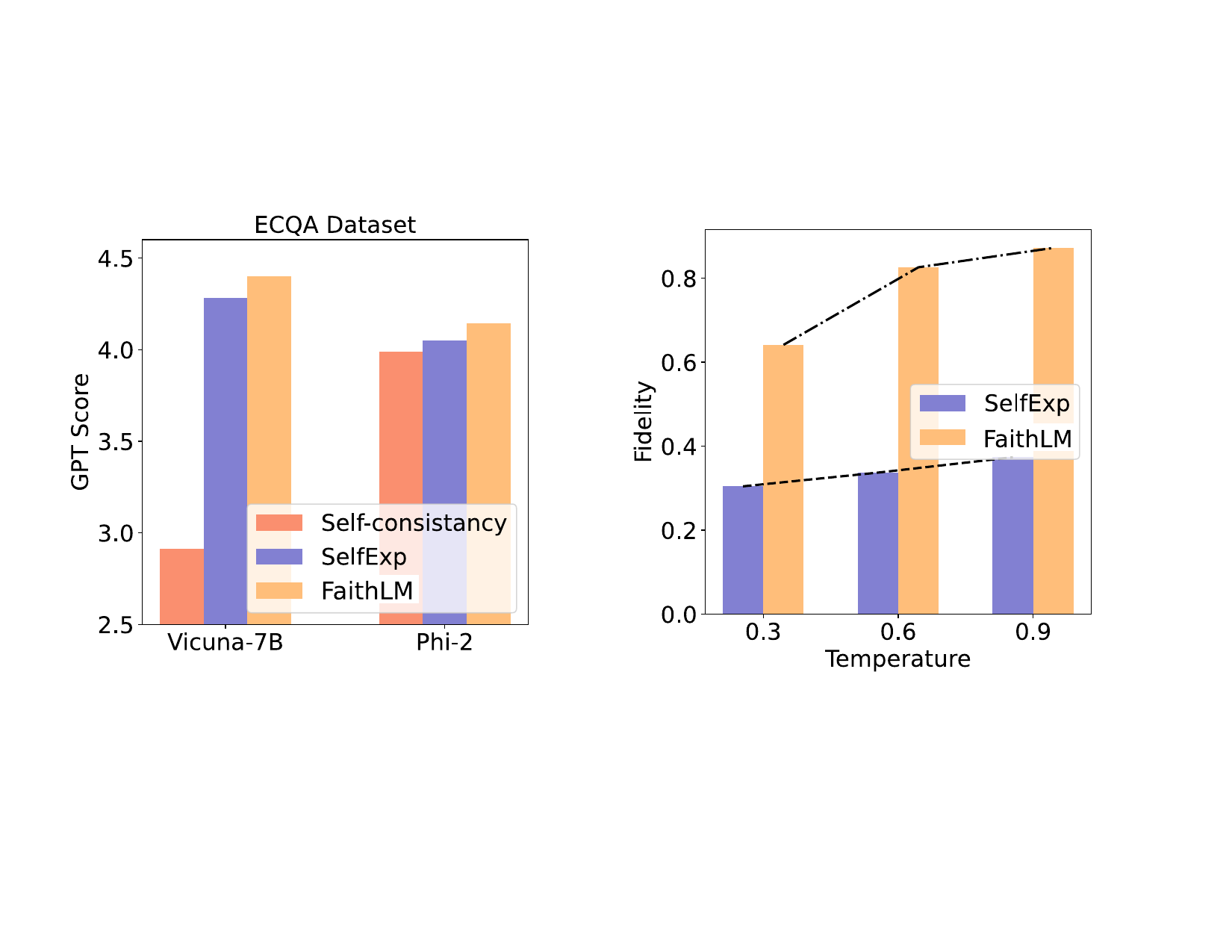}
    \caption{Truthfulness evaluation with ground-truth explanation under GPT-score settings.}
\vspace{-0.35cm}
\label{fig:local_exp_gt_appendix}
\end{figure}

\subsection{Additional Experiments of Contrary Hints.}
We here showcase the results of using NLI classifiers to evaluate the quality of contrary hints.

\begin{figure}[h]
    \centering
    \includegraphics[width=0.38\linewidth]{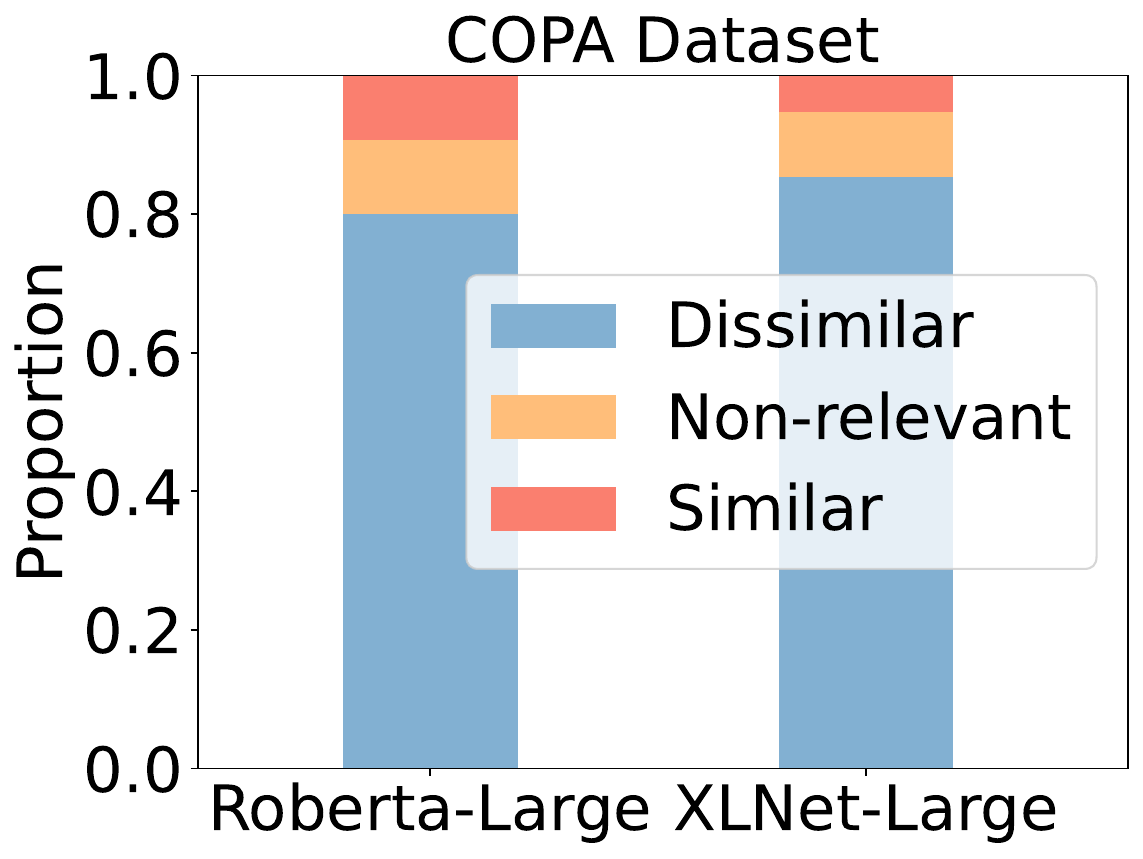}
    \includegraphics[width=0.38\linewidth]{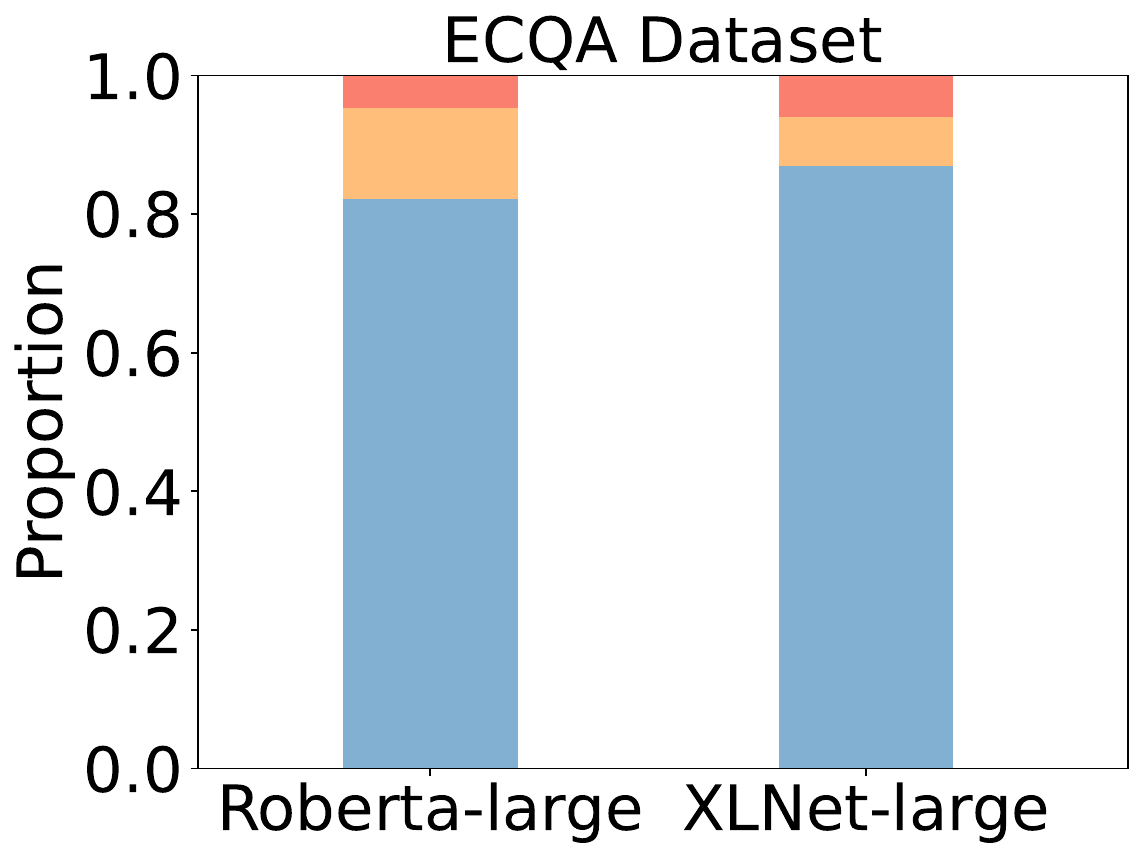}
    \caption{Quality Evaluation results of Contrary Hints via NLI classifiers}
\label{fig:cont_class}
\end{figure}

\clearpage
\section{Details of Evaluation Prompt Usage}
\label{appendix:prompt_eval}
We provide a listing of the evaluation prompts in Table~\ref{tab:eval} utilized in assessing the performance of \Algnameabbr{}. The first row reveals the evaluation prompt on comparing the derived explanation with the ground-truth (GT) explanation in the ECQA dataset in Section~\ref{sec:local_eff}; and the second row demonstrates the evaluation prompt on activating the GPT classifier and the GPT scorer for assessing contrary hints in Section~\ref{sec:abl_non}.

\vspace{0.3cm}
\begin{table*}[h!]
    \small
    \centering
    \begin{tabular}{p{4cm}|p{9cm}} \toprule
        Evaluation Task & Evaluation Prompts \\ 
        \toprule\toprule
        Ground-truth Explanation & 
        Given a user instruction and two AI assistant responses, your job is to classify whether the relation of two responses in S1 and S2 belongs to G-1, G-2, or G-3. The meaning of class is as follows: (G-1) relevant contents, (G-2) irrelevant contents, or (G-3) irrelevant contents. Judge responses holistically, paying special attention to whether two responses have similar contents. Judge responses with only ONE class label as your final answer. \textbf{S1:\{\textit{derived explanation}\}. S2:\{\textit{GT-Explanation}\}.} Please ONLY response your in either G-1, G-2, or G-3; THERE SHOULD BE NO OTHER CONTENT INCLUDED IN YOUR RESPONSE.\\\toprule
        GPT classifier for contrary hints &  Given a user instruction and two AI assistant responses, your job is to classify whether two responses in S1 and S2 belong to G-1, G-2, or G-3. The meaning of class is as follows: (G-1) same semantic meaning, (G-2) opposite semantic meaning, and (G-3) no relation. Judge responses holistically, paying special attention to whether two responses have the same semantic meaning. Judge responses with only ONE class label as your final answer. \textbf{S1:\{\textit{derived explanation}\}. S2:\{\textit{contrary hints}\}.} Please ONLY respond in either G-1, G-2, or G-3; THERE SHOULD BE NO OTHER CONTENT INCLUDED IN YOUR RESPONSE. \\\toprule
        GPT scorer of contrary hints & Given a user instruction and two AI assistant responses, your job is to rate from ONE to FIVE to judge whether two responses in S1 and S2 have the same semantic meaning or not. A FIVE score refers to being totally the same, and ONE score refers to being totally the opposite. Judge responses holistically, paying special attention to whether two responses have the same semantic meaning. The judge responds with the rates between ONE and FIVE. \textbf{S1:\{\textit{derived explanation}\}. S2:\{\textit{contrary hints}\}.} Please ONLY respond to the rate value; THERE SHOULD BE NO OTHER CONTENT INCLUDED IN YOUR RESPONSE. \\
    \bottomrule
    \end{tabular}
    \caption{Evaluation Prompts given to GPT-3.5-Turbo used in assessing the efficacy of \Algnameabbr{}.}
    \label{tab:eval}
\end{table*}

\clearpage
\subsection{Robustness Analytics of Configuration (RQ3)}
In this section, the robustness test of the explainer LLMs is conducted under the analytics of hyper-parameters that are highly dependent on the outputs of LLMs. We focus on two different hyper-parameters: Temperature and Top-p. The experiments are conducted under the explainer GPT-3.5-Turbo and the predictor Vicuna-7B. We evaluate the following temperatures and top-p of the explainer LLM in the range of \{0.3, 0.6, 0.9\}.
The results are shown in Figure~\ref{fig:rob}.
We observe that the explainer LLMs perform inferior when the temperature and Top-p are low, reflecting that the lower exploration of explainer LLM may degrade the optimization ability in explanation generation. The explainer LLMs are encouraged to obtain the temperatures and top-p around 0.9. The small values of the temperatures and top-p may lead to low flexibility in updating new explanations. In contrast, large temperatures and top-p may impact explainer LLMs disobeying the given optimization trajectory. Thus, in the main experiments, all reported performances are under the settings of temperature 0.9 and top-p 0.9, achieving the best performance for generating explanations.

\begin{figure}[h]
\centering
\subfigure{
    \includegraphics[width=0.35\linewidth]{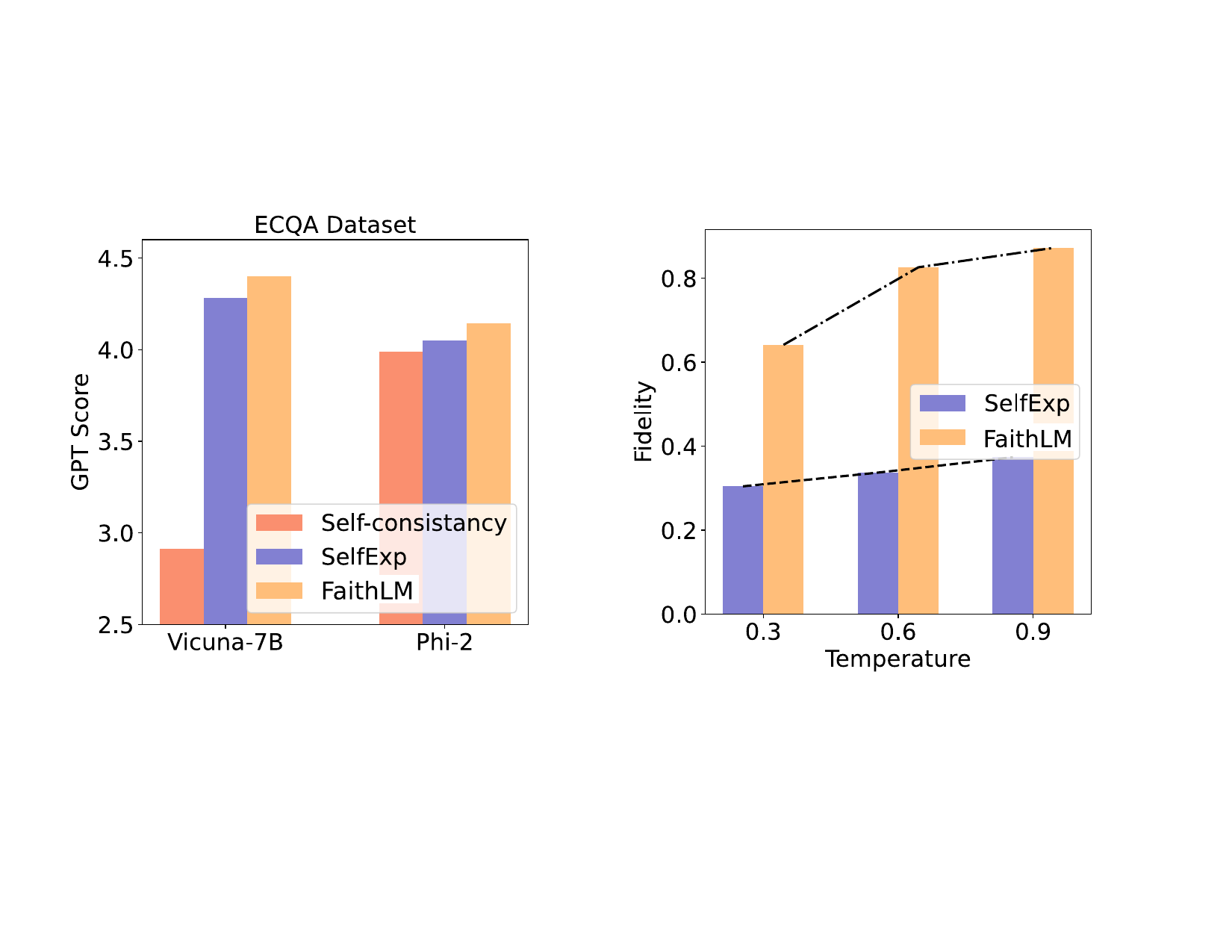}
}
\subfigure{
    \includegraphics[width=0.35\linewidth]{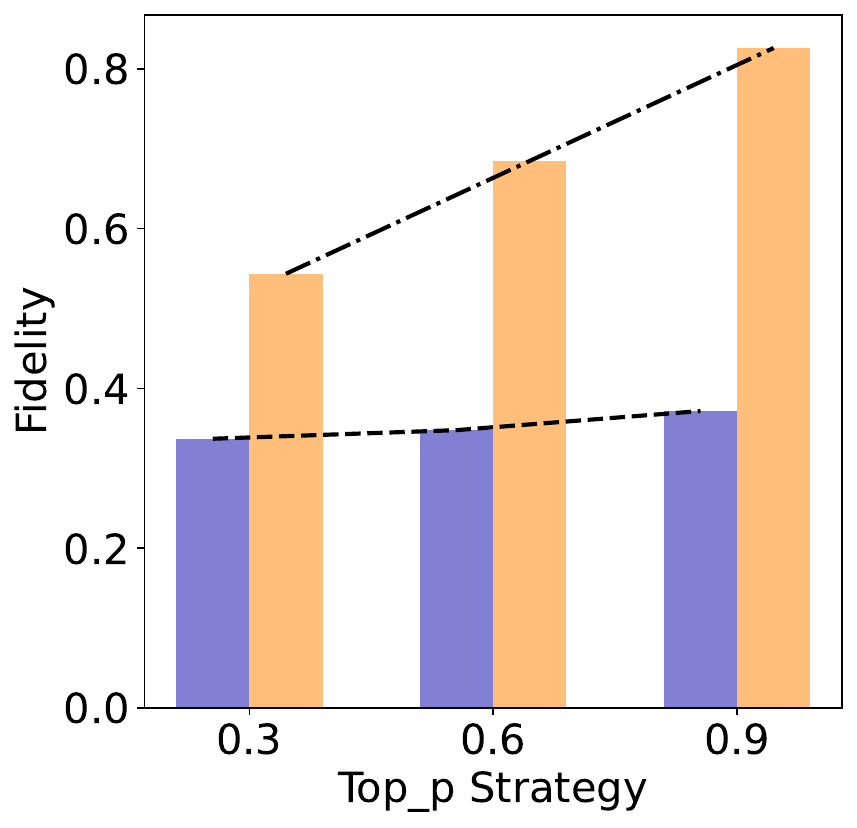}
}
\vspace{-0.2cm}
\caption{Robustness Analytics of \Algnameabbr{}: Temperature (left) and Top-p strategies (right).}
\label{fig:rob}
\end{figure}

\section{Details of Prompts Usage in \Algnameabbr{}}
\label{appendix:prompt_meta}
We provide a listing of the prompts in Table~\ref{tab:eval} utilized in \Algnameabbr{} in different tasks. The first row demonstrates the initial explanation trigger prompt leveraging in both fidelity-enhanced optimization and trigger-oriented optimization. The second row shows the prompt for the LLM agent to generate the contrary hints.

\begin{table*}[h!]
    \small
    \centering
    \begin{tabular}{p{3cm}|p{9cm}} \toprule
        Conducted Task & Evaluation Prompts \\ 
        \toprule\toprule
        Explanation Generation & Please provide objective explanations of why the model generates the answers to the given questions based on your thoughts. Explain the reason why the model provides the answer, no matter if it is wrong or correct. Make sure not to answer the questions or provide any suggestions to better answer the questions by yourself. \textbf{Q:\{\textit{Question}\}. A:\{\textit{Targeted LLM-generated Answer}\}.} \\\toprule
        contrary hints Generation & Please generate one example of obtaining the opposite meaning from a given sentence. Make sure you output sentences only. \textbf{Sentences:\{\textit{derived explanation}\}.} \\\toprule
    \bottomrule
    \end{tabular}
    \caption{The example of prompts that are given to two explainer LLMs and LLM agent for contrary hint.}
    \label{tab:meta}
\end{table*}

\clearpage
\section{Trajectory System Prompts Usage in \Algnameabbr{}}
\label{appendix:prompt_xllm}

We present a detailed listing of the trigger-oriented trajectory prompt in Figure~\ref{appendix:fig:trg} and the explanation-oriented prompt in Figure~\ref{appendix:fig:exp}, as utilized within the \Algnameabbr{} framework.

\subsection{Trigger-oriented Trajectory Prompt}

\begin{figure*}[h!]
\centering
\fcolorbox{black}{white!10}{\parbox{.85\linewidth}
    {
        \textbf{System instruction:} Your task is to generate the general prompts $\textless$INS$\textgreater$ for language model generating model explanations of each question. Below are some previous prompts with their scores in the Inputs. The score is calculated as the flipping answer rates and ranges from 0 to 1. \\\\

        \textbf{Inputs:} The following exemplars show how to apply your text:\\
        Text: Please provide objective explanations of why model generates the answers. \\
        Score: 0.21 \\

        Text: Provide a concise, objective explanation of only the key reasoning or assumptions that likely led the model to generate this specific response. \\
        Score: 0.53 \\

        $\cdots\cdots$ \\

        \textbf{Trajectory Instruction:} Generate a prompt $\textless$INS$\textgreater$ that is different from all prompt $\textless$INS$\textgreater$ in Inputs above and has a higher score than all the prompts $\textless$INS$\textgreater$ from Inputs. The prompts should begin with $\textless$INS$\textgreater$ and end with $\textless$\/INS$\textgreater$ and follow the format of the examples in Inputs. The prompts should be concise, effective, and generally applicable to all problems above. \\\\

        \textbf{Response:} $\textless$A Newly Generated Trigger Prompt$\textgreater$
    }
}
\caption{A examples of \textbf{trigger-oriented trajectory prompt}. This prompt populates in both LLM explainers, which are Cluade2 and GPT-3.5-Turbo. The output of \Algnameabbr{} optimized under trigger-oriented trajectory prompt is append after the \textbf{Response} label.}
\label{appendix:fig:trg}
\end{figure*}

\newpage
\subsection{NL Explanation-oriented Trajectory Prompt}
\begin{figure*}[h!]
\centering
\fcolorbox{black}{white!10}{\parbox{.85\linewidth}
    {
        \textbf{System instruction:} You have some texts along with their corresponding scores. The texts are the possible explanation of the following given question and answer. The texts are arranged in random order based on their scores, where higher scores indicate better quality. The scores are calculated as how relative the texts are toward the given question and answer as the explanation. The scores range from 0 to 1 based on your output text. \\\\

        \textbf{Inputs:} The following exemplars show how to apply your text:\\
        Text:  The model generates the answer "farmland" because an apple tree is likely found in abundance in farmland. \\
        Score: 0.0 \\

        Text: The model generates the answer "farmland" because \textbf{apple trees require open spaces and fertile soil}, both of which are commonly found in farmland.' \\
        Score: 1.0 \\ 

        $\cdots\cdots$ \\

        \textbf{Trajectory Instruction:} You replace $\textless$EXP$\textgreater$ with your text. We say your output is bad if your output obtains lower scores than the previous text, and we say your output is good if your output obtains higher scores than the previous text.
        Please provide new objective text to describe why the answers are given to the questions based on your thoughts. Explain the reason, no matter if the answer is wrong or correct. Make sure not to answer the questions or provide any suggestions to better answer the questions by yourself. Every explanation should begin with $\textless$EXP$\textgreater$. Make sure not to repeat the input queries and answers. Please only output the explanation sentences.
        \\\\

        \textbf{Response:} $\textless$A Newly Generated Trigger Prompt$\textgreater$
    }
}
\caption{A examples of \textbf{explanation-oriented trajectory prompt}. This prompt populates in both LLM explainers, which are Cluade2 and GPT-3.5-Turbo. The output of \Algnameabbr{} optimized under trigger-oriented trajectory prompt is append after the \textbf{Response} label.}
\label{appendix:fig:exp}
\end{figure*}

\clearpage
\section{Case Studies of \Algnameabbr{}}
\label{appendix:case}

\subsection{Explanations Generation}
We demonstrate the case studies of explanation results generated by \Algnameabbr{} after fidelity-enhanced optimization in the following two Figure~\ref{appendix:fig:local_exp_copa} and Figure~\ref{appendix:fig:local_exp_ecqa}. The results show that \Algnameabbr{} can generate faithful explanations that truly express the decision-making process of targeted LLMs.

\vspace{0.2cm}
\begin{figure*}[h!]
\centering
\fcolorbox{black}{gray!10}{\parbox{.9\linewidth}
    {
        \hspace{-1cm}\textbf{Case Study 1} \\
        \textbf{Question:} What is the cause of the Promise? Premise: The movie tickets sold out. \\
        Choices: [choice] It was opening day for the movie. [choice] The movie received poor reviews. \\
        \underline{Ground Truth:} The movie received poor reviews. \\
        \underline{\textbf{LLM Answer:}} It was opening day for the movie. \textbf{(wrong)} \\
        \textbf{derived explanation:} The model likely selected that answer because opening day tends to have very high demand for movies, often resulting in sold out showings. While there could be other reasons for the sell out, high opening day demand is a reasonable explanation given the limited context provided. 
        \hangindent=1cm \hangafter=0
        \\
        
        \hspace{-1cm}\textbf{Case Study 2} \\
        \textbf{Question:} What is the effect of the Promise? Premise: I drilled a hole in the wall. \\
        Choices: [choice] A mouse crawled out of the hole. [choice] Dust blew out of the hole. \\
        \underline{Ground Truth:} Dust blew out of the hole. \\
        \underline{\textbf{LLM Answer:}} Dust blew out of the hole. \textbf{(correct)} \\
        \textbf{derived explanation:} Dust exiting the hole in the wall is a reasonable effect to expect when drilling into a wall. Without further context, there is no reason to assume a mouse coming out of the hole is likely based solely on the premise that a hole was drilled. The model selected the choice directly supported by the information provided in the premise.
        \hangindent=1cm \hangafter=0
    }
}
\caption{Case Study: \textbf{derived explanations} (Dataset: \textbf{COPA}; Targeted Model $f(\cdot)$: \textbf{Claude-2}; Explainer $\textsl{g}(\cdot)$: \textbf{Vicuna})}
\label{appendix:fig:local_exp_copa}
\end{figure*}

\vspace{0.2cm}
\begin{figure*}[h!]
\centering
\fcolorbox{black}{gray!10}{\parbox{.9\linewidth}
    {
        \hspace{-1cm}\textbf{Case Study 1} \\
        \textbf{Question:} Where would you get a dog if you do not have one? \\
        Choices: [choice] pet store [choice] outside [choice] neighbor's house [choice] park [choice] rug \\
        \underline{\textbf{Ground Truth}:} store \\
        \underline{\textbf{LLM Answer:}} store \textbf{(correct)} \\
        \textbf{Derived explanation:} The model suggested "pet store" as the place to get a dog if you don't already have one because pet stores are a common retail business where people can purchase dogs as pets. 
        \hangindent=1cm \hangafter=0
        \\
        
        \hspace{-1cm}\textbf{Case Study 2} \\
        \textbf{Question:} Where would you put coins if you want to bring them with you? \\
        Choices: [choice] medium [choice] water fountain [choice] wishing well [choice] desk [choice] purse \\
        \underline{\textbf{Ground Truth:}} purse \\
        \underline{\textbf{LLM Answer:}} desk \textbf{(wrong)} \\
        \textbf{Derived explanation:} The model may have predicted "desk" because desks often have drawers or containers where coins could be stored temporarily. However, a purse is the most practical option for carrying coins when leaving.
        \hangindent=1cm \hangafter=0
    }
}
\caption{Case Study: \textbf{Derived explanations} (Dataset: \textbf{ECQA}; Targeted Model $f(\cdot)$: \textbf{Claude-2}; Explainer $\textsl{g}(\cdot)$: \textbf{Phi})}
\label{appendix:fig:local_exp_ecqa}
\end{figure*}

\clearpage
\subsection{Explanation Trigger Prompts}
The demonstrations in the explanation trigger prompts generated by \Algnameabbr{} in Figure~\ref{appendix:fig:global_exp_ecqa}. The results show that \Algnameabbr{} can generate explanation trigger prompts that lead explainer LLMs to generate explanations and obtain higher fidelity.

\begin{figure*}[h!]
\vspace{0.3cm}
\centering
\fcolorbox{black}{gray!10}{\parbox{.9\linewidth}
    {
        \hspace{-1cm}\textbf{Initial Explanation Trigger Prompt:} \\
        Please provide objective explanations of why model generates the answers to the given questions based on your thoughts. Explain the reason why the model provides the answer, no matter if it is wrong or correct. Make sure not to answer the questions or provide any suggestions to better answer the questions by yourself. 
        \hangindent=1cm \hangafter=0
        \\
        
        \hspace{-1cm}\textbf{Optimized Trigger Prompt} (Dataset: \textbf{ECQA}; Targeted Model $f(\cdot)$: \textbf{Phi-2}; Explainer $\textsl{g}(\cdot)$: \textbf{Claude-2}): \\
        Explain your reasoning clearly and impartially based solely on the factual inputs, without assumptions. Succinctly identify factual connections and provide clarification if helpful. I tried distilling this down to: clear, impartial reasoning solely from the facts; succinctly identifying factual connections without assumptions; and providing clarification if helpful. The aim is simplified yet effective guidance that remains focused and broadly applicable to explain reasoning across diverse queries. 
        \hangindent=1cm \hangafter=0
        \\

        \hspace{-1cm}\textbf{Optimized Trigger Prompt} (Dataset: \textbf{COPA}; Targeted Model $f(\cdot)$: \textbf{Vicuna-7B}; Explainer $\textsl{g}(\cdot)$: \textbf{Claude-2}): \\
        Provide a concise, objective explanation of only the key reasoning or assumptions that likely led the model to generate this specific response, without repeating the original input or assessing quality. Use the $\textless$EXP$\textgreater$ tag and avoid adding any personal perspectives. I have focused on providing clear, minimal guidelines to elicit explanations that specifically convey the core logic behind each individual response from the model's perspective, rather than overall performance evaluation or subjective opinions. The key aspects aim to produce focused explanations to better understand the model's reasoning, while maintaining brevity and objectivity. 
        \hangindent=1cm \hangafter=0
        \\

        \hspace{-1cm}\textbf{Optimized Trigger Prompt} (Dataset: \textbf{TriviaQA}; Targeted Model $f(\cdot)$: \textbf{GPT-3.5}; Explainer $\textsl{g}(\cdot)$: \textbf{GPT-3.5}): \\
        Create concise and insightful explanations for the model's answers by thoroughly analyzing the relevant information in the passages and their connection to the given questions. Begin each explanation with $\textless$EXP$\textgreater$ and avoid introducing new information or personal opinions. Use precise and concise language to focus on the key points and enhance the clarity and comprehensibility of the explanations. Strive for brevity without sacrificing coherence to ensure the explanations are easily understandable by a wide range of readers. Avoid repetition of the questions or answers and provide valuable insights into the model's reasoning process to improve the effectiveness and impact of the explanations. 
        \hangindent=1cm \hangafter=0
    }
}
\caption{Case Study: Generation of \textbf{Explanation Trigger Prompts}}
\label{appendix:fig:global_exp_ecqa}
\end{figure*}

\clearpage
\subsection{contrary hint}
We demonstrate results on contrary hints leveraged in Fidelity Evaluator. The results in Figure~\ref{appendix:fig:nonfactual_exp_ecqa} show that the contrary hint obtains the opposite meaning of the given explanation.

\vspace{0.3cm}
\begin{figure*}[h!]
\centering
\fcolorbox{black}{gray!10}{\parbox{.9\linewidth}
    {
        \hspace{-1cm}\textbf{Case Study 1} \\
        \textbf{Question:} The child wanted to test electricity, so what did he feel as a result? \\
        Choices: [choice] cause fire [choice] shock [choice] short fuse [choice] kill if used improperly [choice] produce heat \\
        \underline{Ground Truth:} shock \\
        \underline{\textbf{LLM Answer:}} cause fire \textbf{(wrong)} \\
        \textbf{Given Explanation:} The model likely incorrectly associated testing electricity with igniting a fire, rather than understanding that direct contact can cause an electric shock. It failed to comprehend the potential outcomes of unsafe electrical contact. \\
        \textbf{contrary hint:} The model correctly predicted that testing electricity would not cause a fire. It likely fully understood that direct contact with electricity can cause a shock rather than ignite a fire.
        \hangindent=1cm \hangafter=0
        \\
        
        \hspace{-1cm}\textbf{Case Study 2} \\
        \textbf{Question:} Where would you put coins if you want to bring them with you? \\
        Choices: [choice] medium [choice] water fountain [choice] wishing well [choice] desk [choice] purse \\
        \underline{Ground Truth:} purse \\
        \underline{\textbf{LLM Answer:}} purse \textbf{(correct)} \\
        \textbf{Given Explanation:} A purse is a personal item that people often carry with them when going places. It has compartments to store small items like coins, so putting coins in your purse allows you to easily bring them along wherever you go. \\
        \textbf{contrary hint:} The purse is not a good place to put coins if you don't want to bring them with you, because purses are designed for other items, not coins.
        \hangindent=1cm \hangafter=0
    }
}
\caption{Case Study: \textbf{contrary hint} (Dataset: \textbf{ECQA}; Targeted Model $f(\cdot)$: \textbf{Claude-2}; Explainer $\textsl{g}(\cdot)$: \textbf{Vicuna})}
\label{appendix:fig:nonfactual_exp_ecqa}
\end{figure*}

\end{document}